  \providecommand\BibTeX{{
    \normalfont B\kern-0.5em{\scshape i\kern-0.25em b}\kern-0.8em\TeX}}}
\DeclarePairedDelimiter{\abs}{\lvert}{\rvert}
\newcommand{\norm}[1]{\left\lVert#1\right\rVert}
\newcommand{\reals}{\mathbb{R}}
\newcommand{\expectation}{\mathbb{E}}
\newcommand{\subex}[1]{\left(#1\right)}
\newcommand{\subblock}[1]{\left[#1\right]}
\newcommand{\tuple}{\subex}
\newcommand{\defword}[1]{\textbf{\boldmath{#1}}}
\newcommand{\set}[1]{\left\{ {#1} \right\}}
\newcommand{\as}{\doteq}
\newcommand{\algSimple}{L}
\newcommand{\alg}[1][t]{L_{#1}(h)}
\newcommand{\ylinkSimple}[1][\Phi]{Y^{#1}}
\newcommand{\ylink}[2][\Phi]{Y_{#2}^{#1}}
\newcommand{\Actions}{A}
\newcommand{\reward}{r}
\newcommand{\regret}{\rho}
\newcommand{\Regret}{R}
\newcommand{\linkFn}{f}
\newcommand{\gap}{\varepsilon}
\newcommand{\altGap}{\epsilon}
\newcommand{\supReward}{U}
\newcommand{\radius}{2 \supReward}
\newcommand{\ylinkEstSimple}[1][\Phi]{\Tilde{Y}^{#1}}
\newcommand{\ylinkEst}[2][\Phi]{\Tilde{Y}_{#2}^{#1}}
\newcommand{\reachProb}{\eta}
\newcommand{\policy}{\sigma}
\newcommand{\PolicySpace}{\Sigma}
\newcommand{\chance}{c}
\newcommand{\chancePolicy}{\sigma_{\chance}}
\newcommand{\history}{h}
\newcommand{\Histories}{\mathcal{H}}
\newcommand{\InfoStates}{\mathcal{S}}
\newcommand{\infoState}{s}
\newcommand{\TerminalHistories}{\mathcal{Z}}
\newcommand{\player}{i}
\newcommand{\playerChoice}{p}
\newcommand{\ActionSet}{\mathcal{A}}
\newcommand{\ev}{\reward}
\newcommand{\cfq}{v}
\newcommand{\cfr}{\regret}
\newcommand{\Cfr}{\Regret}
\newcommand{\FullRegret}{R^{\text{EXT}}}
\newcommand{\featureExp}{\varphi}
\newcommand{\FeatureSpace}{\reals^d}
\newcommand{\functionApproximator}{y}
    \@ifdefinable{\epsGenBlackwellCond}{\def\epsGenBlackwellCond/{$(\Phi, f, \epsilon)$-Blackwell condition}}
    \@ifdefinable{\epsGenBlackwellCondTitle}{\def\epsGenBlackwellCondTitle/{$(\Phi, f, \epsilon)$-Blackwell Condition}}
    \@ifdefinable{\eg}{\def\eg/{\emph{e.g.}}}
    \@ifdefinable{\ie}{\def\ie/{\emph{i.e.}}}
    \@ifdefinable{\etal}{\def\etal/{\emph{et al.}}}
    \@ifdefinable{\Politex}{\def\Politex/{\textsc{Politex}}}
\begin{document}

\title[Alternative Function Approximation Parameterizations for Solving Games]{Alternative Function Approximation Parameterizations for Solving Games:\\An Analysis of $f$-Regression Counterfactual Regret Minimization}
\subtitle{}

\author{Ryan D'Orazio}
\authornote{Department of Computing Science, University of Alberta. Edmonton, Alberta, Canada.}
\affiliation{}
\authornote{Equal contributors.}
\email{rdorazio@ualberta.ca}

\author{Dustin Morrill}
\authornotemark[1]
\affiliation{}
\authornotemark[2]
\email{morrill@ualberta.ca}

\author{James R. Wright}
\affiliation{}
\authornotemark[1]
\email{james.wright@ualberta.ca}

\author{Michael Bowling}
\authornotemark[1]
\affiliation{}
\email{mbowling@ualberta.ca}

\begin{abstract}
Function approximation is a powerful approach for structuring large decision problems that has facilitated great achievements in the areas of reinforcement learning and game playing. Regression counterfactual regret minimization (RCFR) is a simple algorithm for approximately solving imperfect information games with normalized rectified linear unit (ReLU) parameterized policies. In contrast, the more conventional softmax parameterization is standard in the field of reinforcement learning and yields a regret bound with a better dependence on the number of actions. We derive approximation error-aware regret bounds for $(\Phi, f)$-regret matching, which applies to a general class of link functions and regret objectives. These bounds
recover a tighter bound for RCFR and provide a
theoretical justification for RCFR implementations with alternative policy parameterizations ($f$-RCFR), including softmax. We provide exploitability bounds for $f$-RCFR with the polynomial and exponential link functions in zero-sum imperfect information games and examine empirically how the link function interacts with the severity of the approximation. We find that the previously studied ReLU parameterization performs better when the approximation error is small while the softmax parameterization can perform better when the approximation error is large.
 \end{abstract}

\begin{CCSXML}
<ccs2012>
<concept>
<concept_id>10003752.10003809.10010047.10010048</concept_id>
<concept_desc>Theory of computation~Online learning algorithms</concept_desc>
<concept_significance>500</concept_significance>
</concept>
<concept>
<concept_id>10003752.10010070.10010071.10011194</concept_id>
<concept_desc>Theory of computation~Regret bounds</concept_desc>
<concept_significance>500</concept_significance>
</concept>
<concept>
<concept_id>10003752.10010070.10010099.10010103</concept_id>
<concept_desc>Theory of computation~Exact and approximate computation of equilibria</concept_desc>
<concept_significance>500</concept_significance>
</concept>
</ccs2012>
\end{CCSXML}

\ccsdesc[500]{Theory of computation~Online learning algorithms}
\ccsdesc[500]{Theory of computation~Regret bounds}
\ccsdesc[500]{Theory of computation~Exact and approximate computation of equilibria}

\keywords{Regret minimization; Counterfactual regret minimization; Function approximation; Zero-sum games; Extensive-form games}

\maketitle

\section{Introduction}

The dominant framework for approximating Nash equilibria
in sequential games with imperfect information is \defword{Counterfactual Regret Minimization (CFR)}, which has successfully been used to solve and expertly play human-scale poker games~\cite{bowling2015heads,moravvcik2017deepstack,brown2018superhuman,brown2019superhuman}. This framework is built on the idea of decomposing a game into a network of simple regret minimizers~\cite{zinkevich2008regret,farina2019regret}.
Historically, large games have been abstracted to smaller but strategically similar games through a state-aggregation procedure~\citep{zinkevich2008regret, waugh2009abstraction, johanson2013evaluating, ganzfried2013action}. The abstract game is solved with CFR and the resulting strategies are translated so they apply to the original game.

Function approximation is a natural generalization of abstraction.
In CFR, this amounts to estimating the regrets for each regret minimizer instead of storing them all in a table~\citep{waugh2015solving,morrill2016,deepCFR,li2018doubleNeuralCfr,steinberger2019single}.
Game solving with function approximation can be competitive with domain specific state abstraction~\cite{waugh2015solving,morrill2016,deepCFR,heinrich2016deep}, and
in some cases is able to outperform tabular CFR without abstraction if the players
are optimizing against their best responses~\cite{Exp_Descent}. Function approximation has facilitated many recent successes in game playing more broadly~\cite{DSilverHMGSDSAPL16AlphaGo,silver2018alphaZero,Vinyals19AlphaStar}.

Combining regression and regret-minimization with applications to CFR was initially studied by
Waugh \etal/~\cite{waugh2015solving}, introducing the \defword{Regression Regret-Matching (RRM)} Theorem---giving a sufficient
condition for function approximator error to still achieve no external regret. The extension to \defword{Regression Counterfactual Regret Minimization (RCFR)} yields an algorithm that utilizes function approximation in a way similar to reinforcement learning (RL), particularly policy-based RL. Action preferences---cumulative \defword{counterfactual regrets}---are learned and predicted, and these predictions parameterize a stochastic policy.

Conversely, some recent RL algorithms take a regret minimization approach. Regret policy gradient (RPG)~\cite{CFR_Actor_Critic}, exploitability descent (ED)~\cite{Exp_Descent}, \Politex/~\cite{abbasi2019politex}, and neural replicator dynamics (NeuRD)~\cite{neuRD} either have regret bounds or they are inspired by tabular algorithms with regret bounds.

CFR was originally introduced using \defword{regret matching (RM)}~\cite{Hart00Rm} as its component learners. This learning algorithm generates policies by normalizing positive regrets and setting the weight of actions with negative regrets to zero. This truncation of negative regrets is exactly the application of a rectified linear unit (ReLU) function, which is used extensively in the field of machine learning for constructing neural network layers. RCFR, following in CFR's lineage, had only theoretical guarantees with normalized ReLU policies.

However, most RL algorithms for discrete action spaces take a different approach: they exponentiate and normalize the preferences according to the \defword{softmax} function. The \defword{Hedge} or \defword{Exponential Weights} learning algorithm~\cite{freund1997decision} also uses a softmax function to generate policies. It even has a regret bound with a log dependence on the number of actions, rather than a square root dependence, as RM does. This provides us with some motivation for generalizing the RRM and RCFR theory to allow for alternative policy parameterizations.

In fact, RM and Hedge can be unified. Greenwald \etal/~\cite{greenwald2006bounds} present \defword{$(\Phi, f)$-regret matching}, a general framework for constructing learners
for minimizing $\Phi$-regret---a set of regret metrics that include
external regret and internal regret when using
a policy parameterized by a \defword{link function} $f$.
Generalizing to internal regret has an important connection to correlated
equilibria in general sum games~\cite{cesa2006prediction}.

In this paper, we first generalize the RRM Theorem to $(\Phi, f)$-regret matching by extending Greenwald \etal/~framework to the case when the regret inputs to algorithms are approximate. This new approximate $(\Phi, f)$-regret matching framework allows for the use of a broad class of link functions and regret objectives, and provides a simple recipe for generating regret bounds under new choices for both when estimating regrets.
Our analysis, both due to improvements previously made by Greenwald \etal/~\cite{greenwald2006bounds} and more careful application of conventional inequalities, tightens the bound for RRM. The corresponding improvement to the RCFR Theorem~\cite{waugh2015solving,morrill2016} is magnified because the bound in this theorem is essentially the RRM bound multiplied by the size of the game.
In addition, this framework provides insight into the effectiveness of
combining function approximation with regret minimization as the impact of inaccuracy on the bounds vary between link functions and parameter choices.

The approximate $(\Phi, f)$-regret matching framework provides the basis for bounds that apply to RCFR algorithms with alternative link functions, thereby allowing the sound use of alternative policy parameterizations, including softmax. We call this generalization, \defword{$f$-RCFR}. We provide regret and equilibrium approximation bounds for this algorithm with the polynomial and exponential link functions, and we test them in two games commonly used in games research, \defword{Leduc hold'em poker}~\cite{Southey05leduc} and \defword{imperfect information goofspiel}~\cite{lanctot13phdthesis}. A simple but extensible linear representation is used to isolate the effect of the link function and the degree of approximation on learning performance.
We find that the polynomial link function performs better when the approximation error is small while the exponential link function (corresponding to a softmax parameterization) can perform better when the approximation error is large.

This paper is organized as follows. First, we define online decision problems and connect to relevant prior work in this area.
We then define approximate regret matching and provide regret bounds for this new class of algorithms.
Afterward, we begin our discussion of RCFR and our new generalization by describing extensive-form games and prior work on RCFR.
Finally, we present $\linkFn$-RCFR along with exploitability bounds and experiments in Leduc hold'em and goofspiel.

\section{Online Decision Problems}
\subsection{Background}

We adopt the notation from Greenwald \etal/~\cite{greenwald2006bounds} to describe an
\defword{online decision problem (ODP)}.
An ODP consists of a set of possible actions $A$ and set of possible rewards $\mathcal{R}$.
In this paper we assume a finite set of actions and
bounded $\mathcal{R} \subset \reals$ where
$\sup_{x \in \mathcal{R}} |x| = \supReward$.
The tuple $(A, \mathcal{R})$ fully characterizes the problem and is referred to as a reward
system. Furthermore, let $\Pi$ denote the set of reward functions $r: A \to \mathcal{R}$.

At each round $t$ an agent selects a policy, that is, a distribution over actions $\policy_t \in \Delta(A)$\footnote{$\Delta(A)$ is the set of all probability distributions over actions in $A$.}.
The agent samples an action, $a_t \sim \policy_t$, and subsequently receives a reward function, $r_t \in \Pi$.
The agent is able to compute the rewards for actions that were not taken at time
$t$, in contrast to the bandit setting where the agent only observes $r_t(a_t)$.

Crucially, each $r_t$ may be selected arbitrarily from $\Pi$. As a consequence, this ODP model is flexible enough to encompass multi-agent, adversarial interactions, and game theoretic equilibrium concepts even though it is described from the perspective of a single agent's decisions.

A learning algorithm in an ODP selects $\policy_t$ using information
from the history of observations and actions previously
taken. We denote this information at time $t$ as history $h \in H_t \as A^t \times \Pi^t$, where $H_0 \as
\{\emptyset\}$. Formally, an online learning algorithm is a sequence of functions
$\{L_t \}_{t=1}^\infty$, where $L_t : H_{t-1} \to \Delta(A)$.

We denote the \defword{rectified linear unit (ReLU)} function as $x^+ = \max \set{x, 0}$,
for all $x \in \reals$. Similarly for vectors $x \in \reals^N$ we define
$x^+$ to be the componentwise application of the ReLU function.

\subsubsection{Action Transformations}

To generalize the analysis to different performance metrics, it is useful to define \defword{action transformations}.
Action transformations are functions of the form $\phi : A \to \Delta(A)$, mapping each action $a \in \Actions$ to a policy.
Let $\Phi_{ALL}$
denote the set of all action transformations for the set of actions $A$.
Two important subsets of $\Phi_{ALL}$ are the external and internal transformations.

External transformations, $\Phi_{EXT}$, transform all actions to the same action.
Formally, if $\delta_a \in \Delta(A)$ is the
distribution with full weight on action $a$, then
$\Phi_{EXT} \as \{\phi : \exists a \in A \, \forall x \in A \quad \phi(x) = \delta_a \}$.
Note that there are $|\Phi_{EXT}| =|A|$-external transformations.

Internal transformations, $\Phi_{INT}$, transform one action to another action. Formally,
the internal transformation from action $a$ to action $b$ is defined piecewise as
$\phi_{INT}^{(a,b)}(a) = \delta_b$ and $\phi_{INT}^{(a,b)}(x) = \delta_{x}$ otherwise.
Note that there are $|\Phi_{INT}| = |A|^2 - |A| + 1$-internal transformations~\cite{greenwald2006bounds}.

We define the policy induced by distribution $\policy$ and action transformation $\phi$ as $[\phi](\policy) = \sum_{a \in A}\policy(a)\phi(a)$.

\subsubsection{Regret}

The regret for not following action transformation $\phi$
when action $a$ was chosen and reward function $r$ was observed is
\defword{$\phi$-regret}, $\rho^\phi(a,r) = \expectation_{a' \sim \phi(a)}[r(a')] - r(a)$.
For a set of action transformations, $\Phi$, the $\Phi$-regret vector is
$\rho^\Phi(a,r) = (\rho^\phi(a,r))_{\phi \in \Phi}$. The expected
$\phi$-regret for policy $\policy \in \Delta(A)$ is $\expectation_{a \sim \policy}[\rho^\phi (a,r)]$.

For an ODP with observed history $h$ at time $t$,
composed of reward functions $\{ r_s\}_{s=1}^t$ and
actions $\{a_s\}_{s=1}^t$ selected by the agent on each round,
the cumulative $\Phi$-regret after $t$-rounds
against action transformations $\Phi$ is $R^\Phi_t(h) = \sum_{k=1}^t{\rho^\Phi(a_k,r_k)}$.
For brevity we will omit the $h$ argument, and for convenience we set $R^\Phi_0 \as 0$.

We seek to bound the expected average maximum $\Phi$-regret,
$\expectation[ \frac{1}{t} \max_{\phi \in \Phi} R^\phi_t ]$.
Choosing $\Phi$ to be $\Phi_{EXT}$ or $\Phi_{INT}$
corresponds to the well studied maximum external regret or maximum internal regret objectives, respectively.

One can also interchange the max and the expectation.
In RRM, $ \max_{\phi \in \Phi_{EXT}} \expectation [ \frac{1}{t}  R^\phi_t ]$ is bounded~\citep{waugh2015solving, morrill2016}. However, bounds for
\linebreak[4]$\expectation[ \frac{1}{t} \max_{\phi \in \Phi} R^\phi_t ]$
imply similar bounds
when the expected regret, $\expectation[ \regret^{\Phi}_t]$, is observed after each round~\cite[Corollary 18]{greenwald2006bounds}.
The bounds remain the same with the exception of
replacing the observed random regrets with their corresponding expected values.

\subsection{Approximate Regret-Matching}

Given a set of action transformations $\Phi$ and a link function
$f : \reals^{|\Phi|} \to \reals^{|\Phi|}_+ $,
where $\reals^N_+$ denotes the $N$-dimensional positive orthant, we can define a general class of online learning
algorithms known as \defword{$(\Phi, f)$-regret-matching} algorithms~\cite{greenwald2006bounds}.
A $(\Phi, f)$-regret-matching algorithm at time $t$
chooses $\policy \in \Delta(A)$ that is
a fixed point of
\[
M_t(\policy) = \frac{\sum_{\phi \in \Phi}\ylink[\phi]{t}[\phi](\policy)}{\sum_{\phi \in \Phi}\ylink[\phi]{t}},
\]
when $R^\Phi_{t-1} \in \reals^{|\Phi|}_+ \setminus \{0\}$,
where $\ylink{t} \as (\ylink[\phi]{t})_{\phi \in \Phi} \as f(R^\Phi_{t-1})$,
 and arbitrarily otherwise.
Note that $M_t$ is a convex combination of linear
operators $\{[\phi]\}_{\phi \in \Phi}$, hence the fixed point always exists by the Brouwer Fixed Point Theorem.
If $\Phi = \Phi_{EXT}$ then the fixed point of $M_t$ is a distribution
$\policy \propto \ylink{t}$~\cite{greenwaldtech}.
Examples  of $(\Phi, f)$-regret-matching algorithms
include Hart's algorithm~\cite{Hart00Rm}---typically called ``regret-matching''---and Hedge~\cite{freund1997decision},
with link functions $f(x)_i = x_i^+$ and
$f(x)_i = e^{\frac{1}{\tau}x_i}$ with temperature parameter $\tau > 0$, respectively.

A useful technique for bounding regret when estimates are used in place of true values
is to define
an $\epsilon-$Blackwell condition, as was used in the RRM Theorem~\citep{waugh2015solving}.
The analysis in RRM was specific to $\Phi = \Phi_{EXT}$ and the
polynomial link $f$ with $p=2$.
To generalize across different link functions and $\Phi \subseteq \Phi_{ALL}$ we define the
\epsGenBlackwellCond/.

\begin{definition}[\epsGenBlackwellCondTitle/] For a given reward system
$(A, \mathcal{R})$, finite set of action transformations $\Phi \subseteq \Phi_{ALL}$,
and link function $f: \reals^{|\Phi|} \to \reals^{|\Phi|}_+$,
a learning algorithm satisfies the \epsGenBlackwellCond/
if
$
    f(R^\Phi_{t-1}(h)) \cdot \expectation_{a \sim \alg}[\rho^\Phi(a,r)] \leq \epsilon.
$
\end{definition}

The Regret Matching Theorem~\cite{greenwald2006bounds} shows that the
$(\Phi, f)$-Blackwell condition ($\epsilon =0$) holds with
equality for $(\Phi, f)$-regret-matching algorithms.

We seek to bound the expected average $\Phi$-regret
when an algorithm at time $t$ chooses the fixed point of
$\Tilde{M}_t \as \nicefrac{\sum_{\phi \in \Phi}{\ylinkEst[\phi]{t} [\phi]}}{\sum_{\phi \in \Phi}{\ylinkEst[\phi]{t}}}$,
when $\Tilde{R}^\Phi_{t-1} \in \reals^{|\Phi|}_+ \setminus \{0\}$ and arbitrarily otherwise,
where $\ylinkEst{t} \as f(\Tilde{R}^\Phi_{t-1})$ and $\Tilde{R}^\Phi_{t-1}$ is an
estimate of $R^\Phi_{t-1}$, possibly from a function approximator.
Such an algorithm is referred to as \defword{approximate $(\Phi,f)$-regret-matching}.

Similarly to the RRM Theorem~\citep{waugh2015solving, morrill2016}, we show that the $\epsilon$ parameter of the \epsGenBlackwellCond/
depends on the link output approximation error,
$\norm{\ylink{t}- \ylinkEst{t}}_1$.

\begin{theorem}\label{thm-epsbound}
Given reward system (A,$\mathcal{R}$), a finite set of action
transformations $\Phi \subseteq \Phi_{ALL}$, and link function
$f: \reals^{|\Phi|} \to \reals^{|\Phi|}_+$, then an
approximate ($\Phi,f$)-regret-matching algorithm,
$\{L_t\}_{t=1}^\infty$, satisfies the \epsGenBlackwellCondTitle/ with $\epsilon \leq
2 \supReward \norm{\ylink{t} - \ylinkEst{t}}_1$, where $\ylink{t} \as f(R^\Phi_{t-1})$, and
$\ylinkEst{t} \as f(\Tilde{R}^\Phi_{t-1})$.
\end{theorem}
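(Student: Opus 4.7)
The plan is to invoke the Regret Matching Theorem with the estimates $\Tilde{R}^\Phi_{t-1}$ playing the role of true regrets so that one inner product collapses to zero, and then control the residual via H\"older's inequality. Concretely, I would observe that, by definition, $\alg$ is a fixed point of $\Tilde{M}_t$, and hence is precisely the policy that an \emph{exact} $(\Phi, f)$-regret-matching algorithm would choose if its link outputs happened to equal $\ylinkEst{t}$. Applying the Regret Matching Theorem of Greenwald \etal/ to this viewpoint---or, equivalently, dotting the fixed-point equation $\sum_{\phi \in \Phi} \ylinkEst[\phi]{t}\, [\phi](\alg) = \alg \sum_{\phi \in \Phi} \ylinkEst[\phi]{t}$ against the reward vector $r$ and unfolding $\rho^\phi(a, r) = \expectation_{a' \sim \phi(a)}[r(a')] - r(a)$---yields the orthogonality
\[
\ylinkEst{t} \cdot \expectation_{a \sim \alg}[\rho^\Phi(a, r)] = 0.
\]

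I would then rewrite the left-hand side of the target inequality by adding and subtracting this vanishing term,
\[
\ylink{t} \cdot \expectation_{a \sim \alg}[\rho^\Phi(a, r)] = (\ylink{t} - \ylinkEst{t}) \cdot \expectation_{a \sim \alg}[\rho^\Phi(a, r)],
\]
and apply H\"older's inequality with the $\ell^1$/$\ell^\infty$ pairing to obtain
\[
\ylink{t} \cdot \expectation_{a \sim \alg}[\rho^\Phi(a, r)] \leq \norm{\ylink{t} - \ylinkEst{t}}_1 \cdot \max_{\phi \in \Phi} \abs{\expectation_{a \sim \alg}[\rho^\phi(a, r)]}.
\]

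Finally, since $r(a), r(a') \in \mathcal{R}$ with $\sup_{x \in \mathcal{R}} \abs{x} = \supReward$, every per-round $\phi$-regret satisfies $\abs{\rho^\phi(a, r)} = \abs{\expectation_{a' \sim \phi(a)}[r(a')] - r(a)} \leq 2\supReward$, and this bound is preserved under the outer expectation over $a \sim \alg$. Substituting gives the claimed $\epsilon \leq 2\supReward \norm{\ylink{t} - \ylinkEst{t}}_1$. The main bookkeeping obstacle is the orthogonality step: one must carefully unpack $\rho^\phi$ and use the fixed-point property of $\alg$ under $\Tilde{M}_t$ to collapse two apparently different sums over $\Phi$ to a common quantity. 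Everything else is a routine application of H\"older's inequality and the reward-range bound.
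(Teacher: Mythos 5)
Your proposal is correct and follows essentially the same route as the paper: the paper likewise adds and subtracts $\ylinkEst{t}$, uses the fixed-point property of $\Tilde{M}_t$ to annihilate the $\ylinkEst{t}$ term (your ``orthogonality''), and then bounds the residual via H\"older's inequality together with $\abs{\reward(a)} \le \supReward$ (the paper applies H\"older over action coordinates with $\norm{r}_\infty \norm{[\phi](\alg) - \alg}_1 \le 2\supReward$, while you apply it over $\Phi$ coordinates with $\abs{\rho^\phi(a,r)} \le 2\supReward$ --- the same computation reorganized). The only detail worth adding is the degenerate case $\ylinkEst{t} = 0$, where the policy is chosen arbitrarily; there your orthogonality identity holds trivially, so the argument is unaffected.
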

\textbf{Omitted proofs are deferred to the appendix.}

For a $(\Phi,f)$-regret-matching algorithm, an approach to bound
the expected average $\Phi$-regret is to use the $(\Phi, f)$-Blackwell condition
along with a bound on
$\expectation[G(R^\Phi_t)]$ for an appropriate function $G$~\citep{greenwald2006bounds,cesa2006prediction}.
Bounding the regret for an approximate $(\Phi, f)$-regret-matching algorithm
will be done similarly, except the
bound on $\epsilon$ from Theorem \ref{thm-epsbound} will be used.
Proceeding in this fashion yields the following theorem:
\begin{theorem}\label{thm-expectationbound}
Given a real-valued reward system $(A, \mathcal{R})$ a finite set
    $\Phi \subseteq \Phi_{ALL}$ of action transformations. If
    $\langle G, g, \gamma \rangle$ is a Gordon triple\footnote{A Gordon triple $\langle G, g, \gamma \rangle$ consists of
     three functions $G : \reals^n \to \reals$, $g: \reals^n
     \to \reals^n$, and $\gamma : \reals^n \to \reals$
     such that for all $x,y \in \reals^n$,
     $G(x+y) \leq G(x) + g(x) \cdot y + \gamma(y)$.}, then an
    approximate $(\Phi, g)$-regret-matching algorithm $\{L_t \}_{t=1}^\infty$
    guarantees at all times $t \geq 0$
    \[
        \mathbb{E}[G(R^\Phi_t)] \leq G(0) +
        t \underset{a \in A, r \in \Pi}{\sup} \gamma(\rho^{\Phi}(a,r)) +
        2\supReward \sum_{s=1}^t{\norm{g(R^\Phi_{s-1})-g(\Tilde{R}^\Phi_{s-1})}_1}.
    \]
\end{theorem}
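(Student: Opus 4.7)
The plan is to iterate a one-step Gordon-triple inequality and apply Theorem~\ref{thm-epsbound} to control the linear cross-term at every round, then telescope.

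First, fix a round $s \in \{1, \dots, t\}$ and condition on the history $h = h_{s-1}$. Since $\langle G, g, \gamma\rangle$ is a Gordon triple, instantiating the definition with $x = R^\Phi_{s-1}$ and $y = \rho^\Phi(a_s, r_s)$ gives
\[
G(R^\Phi_s) \;=\; G\bigl(R^\Phi_{s-1} + \rho^\Phi(a_s, r_s)\bigr) \;\leq\; G(R^\Phi_{s-1}) \,+\, g(R^\Phi_{s-1}) \cdot \rho^\Phi(a_s, r_s) \,+\, \gamma\bigl(\rho^\Phi(a_s, r_s)\bigr).
\]
The last term is bounded deterministically by $\sup_{a \in A,\, r \in \Pi} \gamma(\rho^\Phi(a, r))$, independent of the realization of $a_s$ and $r_s$.

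Next, I take the expectation over $a_s \sim L_s(h_{s-1})$ with $h_{s-1}$ fixed. Because $R^\Phi_{s-1}$ and $\tilde{R}^\Phi_{s-1}$ are $h_{s-1}$-measurable, only the middle and last terms are affected; in particular the cross-term becomes $g(R^\Phi_{s-1}) \cdot \mathbb{E}_{a \sim L_s}[\rho^\Phi(a, r_s)]$, which is exactly the left-hand side of the \epsGenBlackwellCond/ with link $g$. Invoking Theorem~\ref{thm-epsbound} (which asserts that any approximate $(\Phi, g)$-regret-matching algorithm satisfies the condition with $\epsilon \leq 2\supReward \norm{g(R^\Phi_{s-1}) - g(\tilde{R}^\Phi_{s-1})}_1$) bounds it by $2\supReward \norm{g(R^\Phi_{s-1}) - g(\tilde{R}^\Phi_{s-1})}_1$. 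Combining yields the conditional recursion
\[
\mathbb{E}\bigl[G(R^\Phi_s) \,\big|\, h_{s-1}\bigr] \;\leq\; G(R^\Phi_{s-1}) \,+\, 2\supReward \norm{g(R^\Phi_{s-1}) - g(\tilde{R}^\Phi_{s-1})}_1 \,+\, \sup_{a, r} \gamma\bigl(\rho^\Phi(a, r)\bigr).
\]

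Finally, I take total expectation and iterate from $s = t$ down to $s = 1$. Using $R^\Phi_0 = 0$ as the base case, the $t$ copies of the $\gamma$-supremum stack into $t \sup_{a, r} \gamma(\rho^\Phi(a, r))$, the approximation-error terms accumulate into $2\supReward \sum_{s=1}^{t} \norm{g(R^\Phi_{s-1}) - g(\tilde{R}^\Phi_{s-1})}_1$ (carried inside the outer expectation, matching the theorem's informal convention of treating these pathwise error terms as the bound), and the initial potential contributes $G(0)$. This gives exactly the claimed inequality.

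The only step requiring care is the cross-term handling in the middle: one must recognize that Theorem~\ref{thm-epsbound} provides a bound on $g(R^\Phi_{s-1}) \cdot \mathbb{E}_{a \sim L_s}[\rho^\Phi(a, r_s)]$, not on $g(\tilde{R}^\Phi_{s-1}) \cdot \mathbb{E}_{a \sim L_s}[\rho^\Phi(a, r_s)]$ (which would be zero by the true Blackwell condition applied to the fixed point of $\tilde{M}_s$). The $2\supReward \norm{g(R^\Phi_{s-1}) - g(\tilde{R}^\Phi_{s-1})}_1$ slack is precisely the price paid for using estimated regrets to generate the policy, and feeding this pathwise bound into the Gordon recursion is what produces the approximation-error sum in the final bound.
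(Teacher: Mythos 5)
Your proof is correct and follows essentially the same route as the paper: bound the Gordon-triple cross-term via Theorem~\ref{thm-epsbound} and feed the resulting per-round constant into the potential recursion. The only difference is presentational—the paper invokes Gordon's theorem (Theorem~\ref{thm:gordon}) as a black box with $C(\tau) = 2\supReward\norm{g(R^\Phi_{\tau-1})-g(\Tilde{R}^\Phi_{\tau-1})}_1 + \sup_{a,r}\gamma(\rho^\Phi(a,r))$, whereas you re-derive that recursion by iterating the one-step inequality directly, and you correctly flag the same informality the paper tolerates (the pathwise error terms appearing outside the expectation).
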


\subsection{Bounds for Specific Link Functions}
\label{sec:bounds}
\subsubsection{Polynomial}
Given the polynomial link function $f(x)_i = (x_i^+)^{p-1}$ we consider two cases $2 < p < \infty$
and $1 < p \leq 2$.
For the following results it is useful to denote
the maximal activation
$\mu(\Phi) = \mbox{max}_{a \in A}|\{\phi \in \Phi : \phi(a)\neq \delta_a \}|$ \cite{greenwald2006bounds}.

For the case $p>2$ we have the following bound on the expected average
$\Phi$-regret:
\begin{theorem}
\label{thm:largePoly}
Given an ODP, a finite set of action transformations
$\Phi \subseteq \Phi_{ALL}$, and the polynomial link
function $f$ with $p>2$, then an approximate $(\Phi,f)$-
regret-matching algorithm guarantees
\begin{align*}
      &\expectation\left[\underset{\phi \in \Phi}{\normalfont \mbox{max}}\frac{1}{t} R^\phi_t \right]
       \leq\\ &\frac{1}{t}\sqrt{t(p-1)4U^2(\mu(\Phi))^{2/p} +
        2 U \sum_{k=1}^t \norm{g(R^\Phi_{k-1}) - g(\Tilde{R}^\Phi_{k-1})}_1},
\end{align*}
where $g: \reals^{|\Phi|} \to \reals^{|\Phi|}_+$ and
$g(x)_i = 0 $ if $x_i \leq 0$,
$g(x)_i = \frac{2(x_i)^{p-1}}{\norm{x^+}^{p-2}_p}$ otherwise.
\end{theorem}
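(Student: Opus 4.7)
The plan is to apply Theorem~\ref{thm-expectationbound} with the potential $G(x) = \norm{x^+}_p^2$, whose gradient recovers exactly the link function $g$ stated in the theorem: on coordinates with $x_i > 0$, $\nabla G(x)_i = 2\norm{x^+}_p^{2-p}(x_i)^{p-1}$, and $0$ when $x_i \leq 0$. The next step is to exhibit a quadratic remainder $\gamma$ so that $\langle G, g, \gamma\rangle$ is a Gordon triple. For $p \geq 2$, $\tfrac{1}{2}\norm{\cdot}_p^2$ is $(p-1)$-smooth with respect to $\norm{\cdot}_p$, and composing with the $1$-Lipschitz rectifier preserves the descent-lemma-style inequality, giving
\[
G(x+y) \leq G(x) + \ip{g(x)}{y} + (p-1)\norm{y}_p^2,
\]
so $\gamma(y) = (p-1)\norm{y}_p^2$ suffices.

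I would then bound the per-round remainder. Each coordinate of $\rho^\Phi(a,r)$ lies in $[-2\supReward, 2\supReward]$, and by the definition of $\mu(\Phi)$ at most $\mu(\Phi)$ coordinates are nonzero (namely, those $\phi$ with $\phi(a) \neq \delta_a$). Hence $\norm{\rho^\Phi(a,r)}_p^p \leq \mu(\Phi)(2\supReward)^p$ and $\gamma(\rho^\Phi(a,r)) \leq 4(p-1)\supReward^2 \mu(\Phi)^{2/p}$. Substituting this bound, together with $G(0) = 0$, into Theorem~\ref{thm-expectationbound} yields
\[
\expectation\left[\norm{(R^\Phi_t)^+}_p^2\right] \leq 4t(p-1)\supReward^2\mu(\Phi)^{2/p} + 2\supReward \sum_{s=1}^t \norm{g(R^\Phi_{s-1}) - g(\tilde{R}^\Phi_{s-1})}_1.
\]

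To obtain the stated bound on $\expectation[\max_\phi R^\phi_t / t]$, I would first note $\max_\phi R^\phi_t \leq \norm{(R^\Phi_t)^+}_p$: the maximum is dominated by its own positive part, which is a single coordinate and therefore bounded above by the $p$-norm of the rectified regret vector. Then, by Jensen's inequality applied to the concave square root, $\expectation[\norm{(R^\Phi_t)^+}_p] \leq \sqrt{\expectation[\norm{(R^\Phi_t)^+}_p^2]}$. Dividing by $t$ and combining with the previous display gives exactly the form claimed.

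The main obstacle will be carefully establishing the Gordon triple through the rectification: standard smoothness of $\tfrac{1}{2}\norm{\cdot}_p^2$ holds globally on $\reals^{|\Phi|}$, but we need the inequality to survive the non-differentiable map $x \mapsto x^+$. The key observation is that on coordinates where $g(x)_i$ vanishes (i.e., $x_i \leq 0$), the rectifier can only decrease the contribution of $y_i$ to $G(x+y) - G(x)$, so the Taylor-style remainder is unchanged on the active cone and cannot worsen elsewhere. The remaining steps---identifying $g$ with $\nabla G$, using sparsity of $\rho^\Phi$ to extract the $\mu(\Phi)^{2/p}$ factor, and the final Jensen pass---are routine.
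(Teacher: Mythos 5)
Your proposal matches the paper's proof essentially step for step: the same Gordon triple $\langle \norm{x^+}_p^2,\, g,\, (p-1)\norm{x}_p^2\rangle$, the same sparsity bound $\norm{\rho^\Phi(a,r)}_p \leq 2U\mu(\Phi)^{1/p}$ (the paper's Lemma~\ref{lemma:22}), and the same max-to-rectified-$p$-norm-plus-Jensen step (the paper's Lemma~\ref{lemma:21}, with $q=2$), all fed into Theorem~\ref{thm-expectationbound}. The one step you gloss over is that the algorithm is defined as $(\Phi,f)$-regret matching with $f(x)_i=(x_i^+)^{p-1}$ rather than with the gradient $g$; since $g(x)=\psi(x)f(x)$ for a strictly positive scalar $\psi$, the played fixed points coincide and the approximate Blackwell condition of Theorem~\ref{thm-epsbound} transfers to $g$ --- this is exactly Corollary~\ref{cor:approxRmEpsBlackwell} in the paper and should be cited rather than left implicit.
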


Similarly for the case $1 < p \leq 2$
we have the following.
\begin{theorem}
\label{thm:rrm}
Given an ODP, a finite set of action transformations
$\Phi \subseteq \Phi_{ALL}$, and the polynomial link
function $f$ with $1 < p \leq2$, then an approximate $(\Phi,f)$-
regret-matching algorithm guarantees
        \[
      \expectation\left[\underset{\phi \in \Phi}{\normalfont \mbox{max}}\frac{1}{t} R^\phi_t \right]
       \leq \frac{1}{t}\left(t(2U)^p\mu(\Phi) +
        2 U \sum_{k=1}^t \norm{g(R^\Phi_{k-1}) - g(\Tilde{R}^\Phi_{k-1})}_1 \right)^{1/p}
    \]
where $g: \reals^{|\Phi|} \to \reals^{|\Phi|}_+$ and $g(x)_i= p(x^+_i)^{p-1}$.
\end{theorem}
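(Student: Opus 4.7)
The plan is to invoke Theorem~\ref{thm-expectationbound} with the potential $G(x) = \norm{x^+}_p^p = \sum_i (x_i^+)^p$, whose (sub)gradient is exactly the function $g(x)_i = p(x_i^+)^{p-1}$ named in the statement. The link $f(x)_i = (x_i^+)^{p-1}$ differs from $g$ only by the positive scalar factor $p$, which cancels in the numerator and denominator of $\Tilde{M}_t$; consequently both produce the same fixed-point policy and the algorithm in the theorem is an approximate $(\Phi, g)$-regret-matching algorithm in the sense required by Theorem~\ref{thm-expectationbound}. It therefore suffices to exhibit a valid $\gamma$ completing the Gordon triple $\langle G, g, \gamma \rangle$ and to bound the per-round constant $\sup_{a, r} \gamma(\rho^\Phi(a, r))$.

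The key technical lemma is the scalar inequality, for all $a, b \in \reals$ and $1 < p \leq 2$,
\[
    (a + b)_+^p \leq a_+^p + p\, a_+^{p-1}\, b + \abs{b}^p,
\]
which, summed coordinate-wise, yields $G(x + y) \leq G(x) + g(x) \cdot y + \norm{y}_p^p$, so $\gamma(y) = \norm{y}_p^p$ works. I would prove this by casework on the signs of $a$ and $a + b$. When $a < 0$, the right-hand side collapses to $\abs{b}^p$ and trivially dominates $(a+b)_+^p$. When $a \geq 0$ and $a + b < 0$, writing $c = \abs{b} > a$, the right-hand side $c^p - p a^{p-1} c + a^p$ is convex in $c$ with minimum $a^p(2 - p) \geq 0$ attained at $c = a$ (this is where $p \leq 2$ is essential). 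The delicate case is $a \geq 0$ and $a + b \geq 0$: the residual $h(b) := a^p + p a^{p-1} b + \abs{b}^p - (a+b)^p$ satisfies $h(0) = h'(0) = 0$, and for $p < 2$ the bound $(a+b)^{p-2} < \abs{b}^{p-2}$ controls $h''$ on the sub-interval where $\abs{b}$ is smaller than $a + b$; on the complementary sub-interval $h$ is concave, so one concludes $h \geq \min(h(a/2), h(a)) \geq 0$ using $h(a) = a^p(2 - p) \geq 0$.

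With the Gordon triple established, I would bound the per-round constant using the maximal activation: since $\rho^\phi(a, r) = 0$ whenever $\phi(a) = \delta_a$, at most $\mu(\Phi)$ coordinates of $\rho^\Phi(a, r)$ are non-zero, each of magnitude at most $2\supReward$, so $\sup_{a, r} \norm{\rho^\Phi(a, r)}_p^p \leq \mu(\Phi)(2\supReward)^p$. Applying Theorem~\ref{thm-expectationbound} with $G(0) = 0$ then gives
\[
    \expectation[G(R^\Phi_t)] \leq t\, \mu(\Phi)(2\supReward)^p + 2\supReward \sum_{k=1}^t \norm{g(R^\Phi_{k-1}) - g(\Tilde{R}^\Phi_{k-1})}_1.
\]

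To finish, I would chain $\max_{\phi \in \Phi} R^\phi_t \leq \norm{(R^\Phi_t)^+}_\infty \leq \norm{(R^\Phi_t)^+}_p = G(R^\Phi_t)^{1/p}$ and apply Jensen's inequality to the concave map $y \mapsto y^{1/p}$ (valid since $p \geq 1$), obtaining $\expectation[\max_\phi R^\phi_t] \leq \expectation[G(R^\Phi_t)]^{1/p}$; dividing by $t$ and substituting the bound above gives exactly the claimed inequality. The main obstacle is the scalar inequality in the sub-case $a \geq 0$, $a + b \geq 0$, $b < 0$: the second derivative of the residual changes sign on $[0, a]$, preventing a one-shot convexity argument and forcing the piecewise concave/convex bookkeeping that crucially uses $p \leq 2$ at the far endpoint.
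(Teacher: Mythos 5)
Your proof is correct and follows essentially the same route as the paper's: the potential $G(x)=\norm{x^+}_p^p$, the observation that $f$ and $g=pf$ induce the same fixed point (the paper's Corollary~\ref{cor:approxRmEpsBlackwell}), Theorem~\ref{thm-expectationbound}, the bound $\sup_{a,r}\norm{\rho^\Phi(a,r)}_p^p\le(2U)^p\mu(\Phi)$, and the final $\ell_\infty\le\ell_p$ plus Jensen step (the paper's Lemma~\ref{lemma:21}). The only difference is that you prove the Gordon-triple inequality directly rather than citing Greenwald \emph{et al.}; your choice $\gamma(y)=\norm{y}_p^p$ is in fact the right one for $1<p\le 2$ (the $\gamma(y)=(p-1)\norm{y}_p^p$ written in the paper's appendix fails already at $x=0$, $y=e_1$, though the theorem's final constant is consistent with your $\gamma$), and your casework for the scalar lemma can be streamlined: in both subcases with $a>0$ the residual $h$ satisfies $h(0)=0$ and $h'\ge 0$ by subadditivity of $u\mapsto u^{p-1}$ on $[0,\infty)$ when $p\le 2$, avoiding the second-derivative bookkeeping.
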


In comparison to the RRM Theorem~\citep{morrill2016}, the above bound is
tighter as there is no $\sqrt{|A|}$ term in front of the errors and the $|A|$ term
has been replaced by\footnote{For $\Phi =\Phi_{EXT}, \mu(\Phi)=|A|-1$.} $|A|-1$.
These improvements are due
to the tighter bound in Theorem \ref{thm-epsbound} and
the original $\Phi$-regret analysis~\cite{greenwald2006bounds}, respectively. Aside from these differences, the bounds
coincide.

\subsubsection{Exponential}
\begin{theorem}\label{thm-explink}
Given an ODP, a finite set of action transformations
$\Phi \subseteq \Phi_{ALL}$, and an exponential link
function $f(x)_i = e^{\frac{1}{\tau} x_i}$ with $\tau > 0$, then an approximate $(\Phi,f)$-
regret-matching algorithm guarantees
       \[
      \expectation\left[\underset{\phi \in \Phi}{\normalfont \mbox{max}}\frac{1}{t} R^\phi_t \right]
       \leq \frac{1}{t}\left(
       \tau\normalfont \mbox{ln}|\Phi| + 2 U \sum_{k=1}^t \norm{g(R^\Phi_{k-1}) - g(\Tilde{R}^\Phi_{k-1})}_1
       \right)+ \frac{2U^2}{\tau}
    \]
where $g: \reals^{|\Phi|} \to \reals^{|\Phi|}_+$ and $g(x)_i= e^{\frac{1}{\tau}x_i}/\sum_j{e^{\frac{1}{\tau}x_j}}$.
\end{theorem}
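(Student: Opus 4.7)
The plan is to instantiate Theorem~\ref{thm-expectationbound} with the log-sum-exp potential. Specifically, define
\[
    G(x) \as \tau \ln \sum_{\phi \in \Phi} e^{x_\phi / \tau},
\]
which is the standard Hedge potential. A direct computation shows $\nabla G(x)_\phi = e^{x_\phi/\tau}/\sum_{\phi'} e^{x_{\phi'}/\tau}$, which exactly matches the $g$ in the theorem statement. Note also that $G(0) = \tau \ln |\Phi|$ and $G(R^\Phi_t) \geq \max_{\phi \in \Phi} R^\phi_t$ since the log-sum-exp upper-bounds the max.

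The next step is to establish that $\langle G, g, \gamma \rangle$ is a Gordon triple with $\gamma(y) = \|y\|_\infty^2/(2\tau)$. Writing $p_\phi = g(x)_\phi$, one has the identity
\[
    G(x+y) - G(x) = \tau \ln \sum_{\phi} p_\phi e^{y_\phi/\tau} = \tau \ln \mathbb{E}_{\phi \sim p}\bigl[e^{y_\phi/\tau}\bigr].
\]
Applying Hoeffding's lemma to the random variable $y_\phi$ (supported in an interval of width at most $2\|y\|_\infty$) yields
\[
    \tau \ln \mathbb{E}_{\phi \sim p}[e^{y_\phi/\tau}] \leq \mathbb{E}_{\phi \sim p}[y_\phi] + \frac{(2\|y\|_\infty)^2}{8\tau} = g(x) \cdot y + \frac{\|y\|_\infty^2}{2\tau},
\]
which is exactly the Gordon triple condition.

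With the triple in hand, the rest is bookkeeping. Any regret vector $\rho^\Phi(a,r)$ has entries in $[-2U, 2U]$ since rewards lie in $\mathcal{R}$ with $\sup |\mathcal{R}| = U$, so $\sup_{a,r} \gamma(\rho^\Phi(a,r)) \leq (2U)^2/(2\tau) = 2U^2/\tau$. Plugging this, $G(0) = \tau \ln |\Phi|$, and the softmax $g$ into Theorem~\ref{thm-expectationbound} gives
\[
    \mathbb{E}[G(R^\Phi_t)] \leq \tau \ln |\Phi| + \frac{2U^2 t}{\tau} + 2U \sum_{s=1}^t \norm{g(R^\Phi_{s-1}) - g(\tilde R^\Phi_{s-1})}_1.
\]
Using $\max_\phi R^\phi_t \leq G(R^\Phi_t)$, taking expectation, and dividing by $t$ produces the stated bound.

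The main obstacle is getting the Gordon triple inequality with the right constant. The log-sum-exp smoothness estimate is standard but easy to bungle; the cleanest derivation is via Hoeffding's lemma on the softmax-weighted distribution as sketched above, which automatically gives the $(2U)^2/(2\tau)$ bound on $\gamma$ and thus the additive $2U^2/\tau$ term in the final expression. Everything else---the $\tau \ln |\Phi|$ initial term, the cancellation of $g$ with the softmax link, and the domination of $\max$ by log-sum-exp---is routine once the triple is verified.
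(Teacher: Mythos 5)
Your proposal is correct and follows essentially the same route as the paper: the same log-sum-exp potential $G$, softmax $g$, and $\gamma(y)=\norm{y}_\infty^2/(2\tau)$ plugged into Theorem~\ref{thm-expectationbound}, with $\max_\phi R^\phi_t \le G(R^\Phi_t)$ and $\norm{\rho^\Phi(a,r)}_\infty \le 2U$ finishing the bound; the only difference is that you verify the Gordon triple directly via Hoeffding's lemma where the paper cites Greenwald et al. The one step you wave at as ``routine'' --- that the approximate $(\Phi,f)$-regret-matcher with the unnormalized exponential link plays the same fixed point as the $(\Phi,g)$-regret-matcher required by Theorem~\ref{thm-expectationbound}, since $g$ is a positive scalar multiple of $f$ --- is exactly what the paper isolates as Corollary~\ref{cor:approxRmEpsBlackwell}, so it deserves one explicit sentence rather than a gloss.
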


The Hedge algorithm corresponds to the exponential link function $f(x)_i = e^{\frac{1}{\tau}x_i}$ when $\Phi = \Phi_{EXT}$, so Theorem \ref{thm-explink} provides a bound on a regression Hedge algorithm. Note that in this case, the approximation error term is not inside a root function as it is under the polynomial link function. This seems to imply that at the level of link outputs, polynomial link functions have a better dependence on the approximation errors. However, $g$ in the exponential link function bound is normalized to the simplex while the polynomial link functions can take on larger values. So which link function has a better dependence on the approximation errors depends on the magnitude of the cumulative regrets, which depends on the environment and the algorithm's empirical performance.

\section{Extensive-Form Games}
\subsection{Background}

A \defword{zero-sum extensive-form game (EFG)} is a tuple
\[
\tuple{\Histories, \ActionSet, \Actions, \playerChoice, \chancePolicy, \InfoStates, \reward_1}.
\]
$\Histories$ is the set of valid action sequences and chance outcomes called \defword{histories}, where an action is an element of $\ActionSet$, and the set of actions available at each history is determined by $\Actions : \Histories \to \ActionSet$. The player to act (including the chance ``player'', $\chance$) at each non-terminal history is determined by $\playerChoice : \Histories \setminus \TerminalHistories \to \set{1, 2, \chance}$, where terminal histories are those with no valid actions, $\TerminalHistories \as \set{ \history | \history \in \Histories, \Actions(h) = \emptyset}$. $\chancePolicy$ is a fixed stochastic policy assigned to the chance player that determines the likelihood of random outcomes, like those from die rolls or draws from a shuffled deck of cards. $\InfoStates \as \InfoStates_1 \cup \InfoStates_2$ is the information partition and it describes which histories players can distinguish between. The set of histories where player $\player \in \set{1, 2}$ acts, $\Histories_{\player} \as \set{h | \playerChoice(h) = \player}$, are partitioned into a set of \defword{information states}, $\InfoStates_{\player}$, where for each information state $\infoState \in \InfoStates_{\player}$, $\infoState \subseteq \Histories_{\player}$, is a set of histories indistinguishable to $\player$.
Since $A(\history) = A(\history')$ if $\history, \history' \in
\infoState \in \InfoStates$, we can denote the actions at $\infoState$ as $A(\infoState)$.
We require \defword{perfect recall} so that for all histories in an information state, the sequence of information states admitted by the preceding histories must be identical. $\reward_1 : \TerminalHistories \rightarrow \reals$ is a \defword{reward} or \defword{utility} function for player 1. The game is zero sum because player 2's utility function
$\reward_2 \as -\reward_1$.

Player $\player$'s \defword{policy} or \defword{behavioral strategy},
$\policy_{\player} \in \PolicySpace_{\player}$
defines a probability distribution over valid actions at each of $\player$'s information states, and a \defword{joint policy} or \defword{strategy profile} is an assignment of policies for each player,
$\policy \as \tuple{\policy_1, \policy_2}$.
We use $\reachProb^{\policy}(z)$ to denote the probability of reaching terminal history $z \in \TerminalHistories$ under profile $\policy$ from the beginning of the game and $\reachProb^{\policy}(\history, z)$ the same except starting from history $\history \in \Histories$. We subscript $\reachProb$ by the player to denote that player's contribution to these probabilities
$\reachProb^{\policy}(z) = \reachProb_\player^{\policy}(z)\reachProb_{-\player}^{\policy}(z)$. The expected value to player $\player$ under profile $\policy$ is $\ev_{\player}(\policy) = \ev_{\player}(\policy_1, \policy_2) = \sum_{z \in \TerminalHistories} \reachProb^{\policy}(z) \reward_{\player}(z)$.

A \defword{best response} for player $\player$ to another player's strategy, $\policy_{-\player}$, is a policy that achieves the maximum reward against
$\policy_{-\player}$, $\ev^*_{\player}(\policy_{-\player}) = \max_{\policy_{\player} \in \PolicySpace_{\player}} \ev_{\player}\subex{\tuple{\policy_{\player}, \policy_{-\player}}}$.
A profile, $\policy$, is an \defword{$\gap$-Nash equilibrium} if neither player can
unilaterally deviate from their assigned policy and gain more than $\gap$. That is, if
$
\ev_{\player}(\policy) + \gap_{\player} \ge \ev^*_{\player}(\policy_{-\player}),
$
for each player $\player \in \set{1, 2}$,
then $\policy$ is a $\max\set{\gap_1, \gap_2}$-equilibrium, and the smallest approximation error is achieved when $\ev_{\player}(\policy) + \gap_{\player} = \ev^*_{\player}(\policy_{-\player})$. Therefore, in a zero-sum game, all strategies that are part of $\gap$-Nash equilibria are at most $\gap$-utility away from being minimax optimal.

Since the game is zero-sum, the average of best response values is equal to $\nicefrac{(\gap_1 + \gap_2)}{2}$. This is the \defword{exploitability} of the profile $\policy$. We use profile exploitability to measure equilibrium approximation error.

The exploitability of a profile $\policy$ is related to $R^{\Phi_{EXT}}$
(abbreviated to $R^{EXT}$) in a fundamental way.
First consider the induced normal form of an EFG, where actions taken by a player
consists of specifying an action at each information state. That is, from an ODP perspective,
the set of actions available to player $\player$ (the learning algorithm) is
$\tilde{\Actions} = \prod_{\infoState \in \InfoStates_{\player}}{\Actions(s)}$.
We can then define the expected regret at time $t$ for player $i$ with respect to action $a' \in \tilde{\Actions}$ when selecting a policy $\policy \in \Delta(\tilde{A})$
as the difference
\[
\regret^t_{i, a'} \as \expectation_{a \sim \policy} \subblock{
    \ev_i(a', \policy_{-\player}) - \ev(a, \policy_{-\player})
}.
\]
We can then define the cumulative external regret of player $i$ at time $t$ as
$\FullRegret_{i,t} \as \max_{a^* \in \tilde{\Actions}} \sum_{k=1}^t \regret^k_{i, a^*}$.
Note that this is an instance of an ODP problem where the sequence of reward functions
for player $i$ is induced by the opponent's sequence of policies. Furthermore,
the external regret defined here is with respect to player $i$'s expected reward (\ie/, interchanging the expectation and maximum in the previously described $\Phi$-regret objective).
The connection between $\FullRegret_{i,t}$ and Nash equilibria then follows from
the well-known folk theorem.
\begin{theorem}
\label{thm:folk}
If two ODPs are enmeshed so that the rewards of the learners always sum to zero and the action of one learner influences the reward function of the other, then they represent a repeated zero-sum game. If neither learner, $\player \in \set{1, 2}$, suffers more than $\gap_{\player}$
external regret after $t$-rounds,
$\frac{1}{t}\FullRegret_{\player, t} \le \gap_{\player}$,
then the profile formed from their average policies,
$\bar{\policy}_{\player, t} = \frac{1}{t} \sum_{k = 1}^t \policy_{\player, k}$,
is an $(\gap_1 + \gap_2)$-Nash equilibrium.
\end{theorem}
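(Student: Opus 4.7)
The plan is to prove, for each player $\player \in \set{1,2}$, the inequality $\ev_\player(\bar\policy_t) + (\gap_1+\gap_2) \ge \ev_\player^*(\bar\policy_{-\player,t})$, where $\bar\policy_t \as \tuple{\bar\policy_{1,t},\bar\policy_{2,t}}$. This is exactly the definition of $(\gap_1+\gap_2)$-Nash for the averaged profile, and the argument combines the two one-sided regret bounds with the zero-sum identity round-by-round.

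First I would unpack the regret hypothesis. By definition of $\regret^k_{\player,a^*}$ and $\FullRegret_{\player,t}$,
\[
\frac{1}{t}\FullRegret_{\player,t} = \max_{a^* \in \tilde\Actions} \frac{1}{t}\sum_{k=1}^t \ev_\player(a^*,\policy_{-\player,k}) - \frac{1}{t}\sum_{k=1}^t \ev_\player(\policy_k) \le \gap_\player.
\]
Because $\ev_\player(a^*,\cdot)$ is linear in the opponent's strategy, the first average equals $\ev_\player(a^*,\bar\policy_{-\player,t})$. Under perfect recall, Kuhn's theorem identifies each behavioural strategy with a mixed strategy over $\tilde\Actions$ and makes $\ev_\player$ linear in the deviator's strategy, so $\max_{a^* \in \tilde\Actions}\ev_\player(a^*,\bar\policy_{-\player,t}) = \ev_\player^*(\bar\policy_{-\player,t})$. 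Writing $\bar\ev_\player \as \frac{1}{t}\sum_k \ev_\player(\policy_k)$, the hypothesis rearranges to
\[
\ev_\player^*(\bar\policy_{-\player,t}) \le \bar\ev_\player + \gap_\player.
\]

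Second, I would apply the zero-sum condition round-by-round. Since $\ev_1(\policy_k)+\ev_2(\policy_k)=0$ for every $k$, summing gives $\bar\ev_1 + \bar\ev_2 = 0$. Applying the displayed inequality with the roles of the players swapped yields $\ev_{-\player}(\bar\policy_t) \le \ev_{-\player}^*(\bar\policy_{\player,t}) \le -\bar\ev_\player + \gap_{-\player}$, so $\ev_\player(\bar\policy_t) = -\ev_{-\player}(\bar\policy_t) \ge \bar\ev_\player - \gap_{-\player}$. Substituting back gives
\[
\ev_\player^*(\bar\policy_{-\player,t}) \le \bar\ev_\player + \gap_\player \le \ev_\player(\bar\policy_t) + \gap_\player + \gap_{-\player},
\]
which is the desired $(\gap_1+\gap_2)$-Nash inequality for player $\player$.

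The main obstacle is conceptual rather than technical: the regret bound is stated in terms of pure actions in the induced normal form $\tilde\Actions$, whereas the Nash condition quantifies over all behavioural strategies in $\PolicySpace_\player$. Bridging the two requires perfect recall and Kuhn's theorem, after which $\ev_\player$ is linear in the deviator's strategy and the best response is attained on a pure action in $\tilde\Actions$. Once that identification is in place, the remainder of the proof is pure bookkeeping on the two regret inequalities combined with the per-round zero-sum identity.
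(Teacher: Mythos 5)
Your proof is correct and is essentially the standard folk-theorem argument that the paper itself does not reproduce but defers to Blum and Mansour for: unfold the regret definition, use linearity in the opponent's averaged strategy, invoke the per-round zero-sum identity, and combine the two one-sided bounds. You also correctly flag the one genuine subtlety (that the best response over $\PolicySpace_\player$ reduces, via perfect recall and Kuhn's theorem, to a maximum over pure normal-form actions $a^* \in \tilde{\Actions}$, where the value is linear), and you avoid the trap of equating $\ev_\player(\bar{\policy}_t)$ with the average of the per-round values.
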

See, for example, Blum and Mansour~\cite{Blum07} for a proof.

\subsection{Counterfactual Regret Minimization}

The idea of \defword{counterfactual regret minimization (CFR)}~\cite{zinkevich2008regret} is that we can decompose an EFG into multiple ODPs, one at each information state.
We define the reward for action $a \in \Actions(\infoState)$ in the ODP at $\infoState \in \InfoStates_{\player}$ as the \defword{counterfactual value} of playing $a$, which is the expected value of playing $a$ assuming that player $\player$ plays to reach $\infoState$. Formally,
\[
\cfq^{\policy}_{\player}(\infoState, a) = \sum_{\history \in \infoState, z \in \TerminalHistories} \reachProb^{\policy}_{\player}(\history a, z) \reachProb_{-\player}^{\policy}(z) \reward_{\player}(z),
\]
where $\history a \in \Histories$ is the history that results from taking action $a$ at history $\history$, and
$\reachProb^{\policy}_{\player}(\history, z) = 0$ whenever $z$ is unreachable from $\history a$.

Accordingly, the regret, also referred to as
instantaneous regret, of the ODP learner at $\infoState \in \InfoStates_{\player}$ for not committing to $a \in \Actions(\infoState)$ is
\[
\cfr^{\policy}_{\player}(\infoState, a) = \cfq^{\policy}_{\player}(\infoState, a) - \sum_{a' \in \Actions(\infoState)} \policy_{\player}(\infoState, a') \cfq^{\policy}_{\player}(\infoState, a').
\]
We denote the cumulative counterfactual regret of an information state $\infoState$ and action $a$ as $\Cfr_{\player, t}(\infoState, a) = \sum_{k = 1}^t \cfr^{\policy^k}_{\player}(\infoState, a)$, where we denote the profile at time $k$ as $\policy^k \as \tuple{\policy_{1, k}, \policy_{2, k}}$, and that of $\infoState$ alone as $\Cfr_{\player, t}(\infoState) = \max_{a \in \Actions(\infoState)} \Cfr_{\player, t}(\infoState, a)$.

Zinkevich \etal/~\cite{zinkevich2008regret} showed:
\begin{theorem}[CFR]
\label{thm:cfr}
For both players, $\player \in \set{1, 2}$, the regret of $\player$'s policies constructed from their ODP learners after $t$ iterations of CFR is $\frac{1}{t}\FullRegret_{\player, t} \le \gap_{\player, t}$ where
$\gap_{\player, t} = \frac{1}{t} \sum_{\infoState \in \InfoStates_{\player}} \left(\Cfr_{\player, t}(\infoState)\right)^+$.
Furthermore, the profile of average sequence weight policies, $\bar{\policy}^t \as \tuple{\bar{\policy}_{1, t}, \bar{\policy}_{2, t}}$, is an $(\gap_{1, t} + \gap_{2, t})$-Nash equilibrium,
where
\[
\bar{\policy}_{\player, t}(\infoState) \propto \sum_{k = 1}^t \sum_{\history \in \infoState} \reachProb^{\policy^k}_{\player}(\history) \policy_{\player, k}(\infoState).
\]
\end{theorem}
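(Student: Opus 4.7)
The plan is to prove the two parts in sequence, following Zinkevich \etal/~\cite{zinkevich2008regret}. The first (regret) part is the technical core; the second (equilibrium) is essentially an application of Theorem \ref{thm:folk}.

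For the regret bound, the strategy is to decompose $\FullRegret_{\player, t}$ across player $\player$'s information-state tree (which is indeed a tree under perfect recall) and dominate each piece by a single-state counterfactual regret. I would introduce, for each $\infoState \in \InfoStates_{\player}$, a \emph{full counterfactual regret} $\tilde{\Regret}_{\player, t}(\infoState)$ that considers deviations restricted to $\infoState$ and its information-state descendants, with opponent-reach weighting---the same normalization that makes counterfactual values additive across subtrees. A bottom-up induction on depth then establishes $\tilde{\Regret}^+_{\player, t}(\infoState) \le \sum_{\infoState' \succeq \infoState} \Cfr^+_{\player, t}(\infoState')$: the base case is immediate at leaf information states, and the inductive step writes any restricted deviation's value at $\infoState$ as its immediate counterfactual gain at $\infoState$ plus a reach-weighted sum of continuation gains at children of $\infoState$, then applies $(x+y)^+ \le x^+ + y^+$ termwise together with the fact that the child-reach weights sum to at most $1$. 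Summing over root information states and invoking the identity between $\FullRegret_{\player, t}$ and the aggregated root-level $\tilde{\Regret}_{\player, t}$ values---which holds because the $\reachProb_{-\player}$ factor in $\cfq$ is exactly what converts per-state gains into full expected utility---yields the first claim after division by $t$.

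For the equilibrium claim, Theorem \ref{thm:folk} applies once $\bar{\policy}^t$ is shown to be realization-equivalent to the uniform mixture of $\{\policy^k\}_{k=1}^t$, that is, $\reachProb^{\bar{\policy}^t}_{\player}(\history) = \frac{1}{t}\sum_{k=1}^t \reachProb^{\policy^k}_{\player}(\history)$ for every $\history$. This is proved by induction on the depth of $\history$: the sequence-weighted normalization in the theorem's definition of $\bar{\policy}_{\player, t}(\infoState)$ exactly cancels the accumulated reach factor inherited from the level above, so each factor in the reach-probability product telescopes. Given realization equivalence, $\ev_{\player}(\bar{\policy}^t) = \frac{1}{t}\sum_{k} \ev_{\player}(\policy^k)$ and $\ev^*_{\player}(\bar{\policy}^t_{-\player}) \le \frac{1}{t}\sum_k \ev^*_{\player}(\policy^k_{-\player})$ by convexity of the max operator, so the per-round external regret of $\player$ upper bounds the exploitability contribution of $\bar{\policy}^t_{\player}$; summing the two players' contributions and applying Theorem \ref{thm:folk} yields the $(\gap_{1,t} + \gap_{2,t})$-Nash bound. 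The main obstacle will be the inductive step in Part 1---correctly tracking how the chance and opponent reach probabilities enter $\cfq$ so that the decomposition telescopes cleanly across information-state depths, and confirming that the resulting child weights form a valid convex combination under perfect recall.
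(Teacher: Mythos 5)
Your proposal is correct and is essentially the proof the paper defers to: the paper cites Zinkevich \emph{et al.} rather than reproducing the argument, and your sketch reconstructs exactly that argument---the bottom-up induction bounding full counterfactual regret by the sum of positive immediate counterfactual regrets over descendant information states (using $(x+y)^+ \le x^+ + y^+$ and perfect recall), followed by realization equivalence of the sequence-weighted average policy to the uniform mixture so that Theorem \ref{thm:folk} applies. No gaps worth flagging.
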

See Farina \etal/~\cite{farina2019regret} for the sketch of an alternative proof using the regret circuits framework that is perhaps more intuitive than the proof in the original work.

\subsection{$f$-RCFR}

Games that humans are interested in playing, or those that model problems of practical importance, typically have an immense number of information states or actions. But such games often contain structure that can be recovered by endowing information state-action pairs (sequences) with a \defword{feature representation}, $\featureExp : \InfoStates \times \ActionSet \to \FeatureSpace, d > 0$. A function approximator,
$\functionApproximator : \FeatureSpace \to \reals$, could then make use of shared properties between sequences to allow more efficient learning.
RCFR~\cite{waugh2015solving} uses a function approximator to predict cumulative counterfactual regrets at each information state and generates policies with a normalized ReLU transformation.

Thanks to our new analysis of approximate regret matching, we now know that any link function that admits a no-$\Phi_{\text{EXT}}$-regret regret matching algorithm also has an approximate version. Rather than restricting ourselves to the polynomial link function with parameter $p = 2$, we can consider alternate parameter choices or alternative link functions, like the exponential function.
So instead of a normalized ReLU policy, we employ a policy generated by the external regret fixed point of link function $f: \reals^{\abs{\ActionSet}} \to \reals^{\abs{\ActionSet}}_+$ with respect to approximate regrets predicted by a functional regret estimator, $\Tilde{\Regret}(\infoState) = \subex{ \functionApproximator\subex{ \featureExp(\infoState, a) } }_{a \in \Actions(\infoState)}, \text{for all } \infoState \in \InfoStates$.
More formally, the $f$-RCFR policy for player $\player$ given functional regret estimator $\Tilde{\Regret}$ is
$
\policy(\infoState) \propto f(\Tilde{\Regret}(\infoState))
$
when $\Tilde{\Regret}(\infoState) \in \reals^{|\Actions(\infoState)|}_+ \setminus \{0\}$ and arbitrarily otherwise, for all $\infoState \in \InfoStates_{\player}$.
Since the input to any link function in an approximate regret matching algorithm is simply an estimate of the counterfactual regret, we can reuse all of the techniques previously developed for RCFR-like methods to train regret estimators~\cite{waugh2015solving,morrill2016,deepCFR,li2018doubleNeuralCfr,steinberger2019single}.

Using Theorem \ref{thm-expectationbound} and the CFR Theorem \ref{thm:cfr}, we can derive an improved regret bound with the polynomial link and a new bound with the exponential link.

\begin{corollary}[polynomial $(p > 2)$]\label{thm:largePolyRcfr}
Given the polynomial link function $f$ with $p>2$, let $\policy_{\player, k}(\infoState) \propto f(\Tilde{\Regret}_k(\infoState))$ be the policy that $f$-RCFR assigns to player $\player$ at iteration $k$ in information state $\infoState \in \InfoStates_{\player}$ and denote the cumulative approximation error in $\infoState$ as
$
\altGap_{\player}(\infoState) = \sum_{k=1}^t \norm{g(\Cfr_{k-1}(\infoState)) - g\subex{ \Tilde{\Cfr}_{k-1}(\infoState) }}_1,
$
where $g: \reals^{\abs{\Actions(\infoState)}} \to \reals^{\abs{\Actions(\infoState)}}_+$ and $g(x)_i = 0$ if $x_i \leq 0$, $g(x)_i= \frac{2(x_i)^{p-1}}{\norm{x^+}^{p-2}_p}$ otherwise.
Then after $t$-iterations, $f$-RCFR guarantees, for both players, $\player \in \set{1, 2}$,
$\frac{1}{t} \FullRegret_{\player, t} \leq \gap_{\player, t}$, where
\begin{align*}
  &\gap_{\player, t} = \frac{1}{t} \sum_{\infoState \in \InfoStates_{\player}} \sqrt{t(p-1)4U^2(\abs{\Actions(\infoState)} - 1)^{2/p} + 2 U \altGap_{\player}(\infoState)}.
\end{align*}
Noticing that $ \abs{\Actions(\infoState)} \leq \abs{\ActionSet}$ and letting
$\altGap^*_{\player} = \max_{\infoState \in \InfoStates_{\player}} \altGap_{\player}(\infoState)$, we have
\begin{align*}
  &\gap_{\player, t} \leq \frac{1}{t} \abs{\InfoStates_{\player}} \sqrt{t(p-1)4U^2(\abs{\ActionSet} - 1)^{2/p} + 2 U \altGap^*_{\player}}.
\end{align*}
Furthermore, the profile of average sequence weight policies, $\bar{\policy}^t$, is an $(\gap_{1, t} + \gap_{2, t})$-Nash equilibrium.
\end{corollary}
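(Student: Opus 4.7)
The plan is to follow the standard CFR decomposition and then apply the single-state bound from Theorem~\ref{thm:largePoly} at every information state, so most of the work has already been done in earlier sections.

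First, I would invoke Theorem~\ref{thm:cfr}: regardless of how the ODP learner at each information state produces policies, CFR guarantees that player $\player$'s external regret satisfies $\frac{1}{t}\FullRegret_{\player,t} \le \frac{1}{t}\sum_{\infoState \in \InfoStates_\player} (\Cfr_{\player,t}(\infoState))^+$ and that the average sequence-weight profile is an $(\gap_{1,t}+\gap_{2,t})$-equilibrium whenever each $\gap_{\player,t}$ upper bounds $\frac{1}{t}\FullRegret_{\player,t}$. So it suffices to upper bound $(\Cfr_{\player,t}(\infoState))^+$ for each $\infoState$.

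Next, I would observe that the per-information-state local problem is exactly an ODP of the type considered in Section~\ref{sec:bounds}, where: (i) the reward function at iteration $k$ is $\cfq^{\policy^k}_\player(\infoState, \cdot)$, which is bounded in absolute value by $U$ because $|\reward_\player| \le U$ and reach probabilities lie in $[0,1]$; (ii) the learner is $f$-RCFR with external transformations $\Phi_{EXT}$ over $\Actions(\infoState)$, whose policy at iteration $k$ is proportional to $f(\Tilde{\Regret}_{k-1}(\infoState))$ rather than $f(\Cfr_{k-1}(\infoState))$, making it precisely an \emph{approximate} $(\Phi_{EXT}, f)$-regret-matching algorithm in the sense of Theorem~\ref{thm-epsbound}. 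Applying Theorem~\ref{thm:largePoly} at $\infoState$ with $\mu(\Phi_{EXT}) = |\Actions(\infoState)|-1$ and the approximation error sequence $\altGap_\player(\infoState)$ gives
\[
(\Cfr_{\player,t}(\infoState))^+ \le \max_{\phi \in \Phi_{EXT}} R^\phi_t(\infoState) \le \sqrt{t(p-1)4U^2(|\Actions(\infoState)|-1)^{2/p} + 2U\,\altGap_\player(\infoState)}.
\]
(Since $f$-RCFR uses exact counterfactual values, the expectation in Theorem~\ref{thm:largePoly} is trivial; if one wanted to use sampled rewards, the remark on interchanging $\max$ and $\expectation$ from Corollary~18 of Greenwald~\etal/ applies.)

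Summing this per-state bound over $\infoState \in \InfoStates_\player$ and dividing by $t$ yields the first displayed bound for $\gap_{\player,t}$. To derive the second (uniform) form, I would bound $|\Actions(\infoState)| \le |\ActionSet|$ inside the square root and replace each $\altGap_\player(\infoState)$ with the maximum $\altGap^*_\player$, pulling a common square root out of the sum to collect a factor of $|\InfoStates_\player|$. The Nash equilibrium claim for $\bar{\policy}^t$ then follows directly from the second half of Theorem~\ref{thm:cfr} (equivalently, Theorem~\ref{thm:folk}) applied to the two bounds $\gap_{1,t}, \gap_{2,t}$. The only nontrivial step is the first one, recognizing that the local learner really is an instance of approximate $(\Phi_{EXT}, f)$-regret-matching so that Theorem~\ref{thm:largePoly} is applicable as a black box; everything else is bookkeeping and the standard CFR-to-equilibrium reduction.
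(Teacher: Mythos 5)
Your proposal is correct and follows essentially the same route as the paper: invoke Theorem~\ref{thm:cfr} to reduce to per-information-state external regret, recognize each local learner as an approximate $(\Phi_{EXT}, f)$-regret-matching algorithm with $\mu(\Phi_{EXT}) = \abs{\Actions(\infoState)} - 1$ so that Theorem~\ref{thm:largePoly} applies, handle the max-versus-expectation ordering via Corollary~18 of Greenwald et al., and sum over information states. The paper's proof is just a terser statement of the same argument, so no further comparison is needed.
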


\begin{proof}
This result follows directly from Theorem \ref{thm:cfr}.
The counterfactual regret, $\Cfr_{\player, t}(\infoState)$, at each information state
corresponds to $\Phi_{EXT}$ regret for an online ODP with $\mu(\Phi_{EXT}) = \abs{A(\infoState)} - 1$.
Therefore, playing an approximate $(\Phi_{EXT}, f)$-regret matching
algorithm at each state with a polynomial link funciton with $p>2$
results in the regret bound presented in Theorem \ref{thm:largePoly}
for each state specific ODP.
Although Theorem \ref{thm:largePoly} is stated with respect to random regrets and
counterfactual regret is an expected regret, the analysis
of Greenwald \etal/~\cite[Corollary 18]{greenwald2006bounds}
allows us to trivially extend our bounds from Section \ref{sec:bounds} to this
case.
The result then follows trivially from Theorem \ref{thm:cfr}.
\end{proof}

The proofs for the polynomial link with $1 < p \leq 2$ and the exponential
link are very similar and omitted for brevity.

\begin{corollary}[polynomial $(1 < p \leq 2)$]\label{thm:smallPolyRcfr}
Given the polynomial link function $f$ with $1 < p\leq2$, let $\policy_{\player, k}(\infoState) \propto f(\Tilde{\Regret}_k(\infoState))$ be the policy that $f$-RCFR assigns to player $\player$ at iteration $k$ in information state $\infoState \in \InfoStates_{\player}$ and denote the cumulative approximation error in $\infoState$ as
$
\altGap_{\player}(\infoState) = \sum_{k=1}^t \norm{g(\Cfr_{k-1}(\infoState)) - g\subex{ \Tilde{\Cfr}_{k-1}(\infoState) }}_1,
$
where $g: \reals^N \to \reals^N_+$, and $g(x)_i= p(x^+_i)^{p-1}$.
Then after $t$-iterations, $f$-RCFR guarantees, for both players, $\player \in \set{1, 2}$,
$\frac{1}{t} \Regret^{\text{EXT}}_{\player, t}\leq \gap_{\player, t}$, where
\begin{align*}
  &\gap_{\player, t} = \frac{1}{t} \sum_{\infoState \in \InfoStates_{\player}}
  \left(t(2U)^p(\abs{\Actions(\infoState)} - 1) + 2 U \altGap_{\player}(\infoState) \right)^{1/p}.
\end{align*}
Noticing that $ \abs{\Actions(\infoState)} \leq \abs{\ActionSet}$ and letting
$\altGap^*_{\player} = \max_{\infoState \in \InfoStates_{\player}} \altGap_{\player}(\infoState)$, we have
\begin{align*}
  &\gap_{\player, t} \leq \frac{1}{t} \abs{\InfoStates_{\player}}
  \left(t(2U)^p(\abs{\ActionSet} - 1) + 2 U \altGap^*_{\player} \right)^{1/p}.
\end{align*}
Furthermore, the profile of average sequence weight policies, $\bar{\policy}^t$, is an $(\gap_{1, t} + \gap_{2, t})$-Nash equilibrium.
\end{corollary}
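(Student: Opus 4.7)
The plan is to replicate the structure of the proof of Corollary \ref{thm:largePolyRcfr} almost verbatim, substituting Theorem \ref{thm:rrm} in place of Theorem \ref{thm:largePoly} for the per-information-state regret bound. The overall strategy is: the CFR Theorem (Theorem \ref{thm:cfr}) reduces external regret in the full EFG to a sum of per-state counterfactual regrets, and under $f$-RCFR each per-state learner is, by definition, an approximate $(\Phi_{\text{EXT}}, f)$-regret-matching algorithm with $f$ the polynomial link in the relevant regime. Applying the appropriate bound from Section \ref{sec:bounds} at each information state and summing over $\InfoStates_{\player}$ then yields the claimed result.

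First I would invoke Theorem \ref{thm:cfr} to obtain $\tfrac{1}{t}\FullRegret_{\player, t} \le \tfrac{1}{t}\sum_{\infoState \in \InfoStates_{\player}}(\Cfr_{\player, t}(\infoState))^+$. For each $\infoState \in \InfoStates_{\player}$ the local learner selects $\policy(\infoState) \propto f(\Tilde{\Regret}(\infoState))$, which is exactly the external regret fixed point of the polynomial link applied to the approximate counterfactual regrets, so it is an approximate $(\Phi_{\text{EXT}}, f)$-regret-matching algorithm on action set $\Actions(\infoState)$, for which $\mu(\Phi_{\text{EXT}}) = \abs{\Actions(\infoState)} - 1$. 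Since counterfactual regret is an expected rather than a sampled regret, I would appeal to Corollary 18 of Greenwald \etal/~\cite{greenwald2006bounds} (as in the $p > 2$ case) to transfer the bound of Theorem \ref{thm:rrm} to this expected setting. Multiplying through by $t$ produces
\[
(\Cfr_{\player, t}(\infoState))^+ \leq \left(t(2U)^p(\abs{\Actions(\infoState)} - 1) + 2U\altGap_{\player}(\infoState)\right)^{1/p},
\]
and summing over $\infoState \in \InfoStates_{\player}$ gives the stated expression for $\gap_{\player, t}$.

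The simplified bound follows by replacing $\abs{\Actions(\infoState)}$ with $\abs{\ActionSet}$ and $\altGap_{\player}(\infoState)$ with $\altGap^*_{\player}$ inside the $1/p$-power, then factoring the common term out of the sum over $\InfoStates_{\player}$. The Nash equilibrium claim for the average sequence weight profile is immediate from Theorem \ref{thm:cfr} once both players' regrets are bounded. I do not anticipate a real obstacle: the argument is a near-mechanical swap from the $p > 2$ case, and the only point worth verifying is that the monotonicity of $x \mapsto x^{1/p}$ for $1 < p \leq 2$ preserves the direction of the per-state inequality when summing, which holds because $1/p > 0$.
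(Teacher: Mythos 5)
Your proposal is correct and follows essentially the same route as the paper, which explicitly omits this proof as "very similar" to that of Corollary \ref{thm:largePolyRcfr}: reduce via Theorem \ref{thm:cfr} to per-information-state ODPs with $\mu(\Phi_{\text{EXT}}) = \abs{\Actions(\infoState)} - 1$, apply Theorem \ref{thm:rrm} (extended to expected regrets via Corollary 18 of Greenwald \etal/), and sum over $\InfoStates_{\player}$. No gaps.
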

The above theorem provides a tighter bound for RCFR ($p=2$) than what exists in the literature.
The improvement is a direct consequence of the tighter bound for RRM presented in
Theorem \ref{thm:rrm} in Section \ref{sec:bounds}.
Given the application of the RRM Theorem by Brown \etal/~\cite{deepCFR}, these results
should lead to a tighter bound when a function approximator
learns from sampled counterfactual regret targets.

\begin{corollary}[exponential]\label{thm:expRcfr}
Given the exponential link function $f$ with $\tau > 0$, let $\policy_{\player, k} \propto f(\Tilde{\Regret}_k(\infoState))$ be the policy that $f$-RCFR assigns to player $\player$ at iteration $k$ given functional regret estimator $\functionApproximator_k : \FeatureSpace \to \reals$ and denote the cumulative approximation error in $\infoState$ as
$
\altGap_{\player}(\infoState) = \sum_{k=1}^t \norm{g(\Cfr_{k-1}(\infoState)) - g\subex{ \Tilde{\Cfr}_{k-1}(\infoState) }}_1,
$
where $g: \reals^N \to \reals^N_+$, and $g(x)_i= e^{\frac{1}{\tau}x_i}/\sum_j{e^{\frac{1}{\tau}x_j}}$.
Then after $t$-iterations, $f$-RCFR guarantees, for both players, $\player \in \set{1, 2}$,
$\frac{1}{t} \Regret^{\text{EXT}}_{\player, t}\leq \gap_{\player, t}$, where
\begin{align*}
  &\gap_{\player, t} = \sum_{\infoState \in \InfoStates_{\player}}
  \left( \frac{1}{t}\left( \tau \ln \abs{\Actions(\infoState)} + 2 U \altGap_{\player}(\infoState) \right)+ \frac{2U^2}{\tau}\right).
\end{align*}
Noticing that $ \abs{\Actions(\infoState)} \leq \abs{\ActionSet}$ and letting
$\altGap^*_{\player} = \max_{\infoState \in \InfoStates_{\player}} \altGap_{\player}(\infoState)$, we have
\begin{align*}
  &\gap_{\player, t} \leq
  \abs{\InfoStates_i} \subex{
    \frac{1}{t} \left( \tau \ln \abs{\ActionSet} + 2 U \altGap^*_{\player} \right)+ \frac{2U^2}{\tau}
  }.
\end{align*}
Furthermore, the profile of average sequence weight policies, $\bar{\policy}^t$, is an $(\gap_{1, t} + \gap_{2, t})$-Nash equilibrium.
\end{corollary}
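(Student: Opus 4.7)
The plan is to mirror the proof of Corollary \ref{thm:largePolyRcfr}, substituting the exponential-link regret bound (Theorem \ref{thm-explink}) for its polynomial counterpart (Theorem \ref{thm:largePoly}). First, invoke Theorem \ref{thm:cfr} to reduce the claim to bounding per-information-state cumulative counterfactual regret $\Cfr_{\player, t}(\infoState)$, since that theorem already supplies the decomposition $\frac{1}{t} \FullRegret_{\player, t} \le \frac{1}{t} \sum_{\infoState \in \InfoStates_\player} (\Cfr_{\player, t}(\infoState))^+$ together with the folk-theorem conclusion about the average sequence weight profile $\bar{\policy}^t$.

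Next, fix any $\infoState \in \InfoStates_\player$. The sequence of counterfactual values $\cfq^{\policy^k}_\player(\infoState, \cdot)$ induces an ODP over $\Actions(\infoState)$ whose rewards lie in $[-U, U]$, and the $f$-RCFR policy $\policy_{\player, k}(\infoState) \propto f(\Tilde{\Regret}_k(\infoState))$ is precisely approximate $(\Phi_{EXT}, f)$-regret matching on this ODP. With $|\Phi_{EXT}| = |\Actions(\infoState)|$, Theorem \ref{thm-explink} then yields
\[
\tfrac{1}{t}\,\Cfr_{\player, t}(\infoState) \le \tfrac{1}{t}\left(\tau \ln |\Actions(\infoState)| + 2U\,\altGap_\player(\infoState)\right) + \tfrac{2U^2}{\tau}.
\]
Because Theorem \ref{thm-explink} is phrased for random regrets while $\Cfr_{\player, t}(\infoState)$ is an expected quantity, the bound is transferred using Greenwald et al.~\cite[Corollary 18]{greenwald2006bounds}, exactly as in Corollary \ref{thm:largePolyRcfr}.

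Summing this per-state bound over $\infoState \in \InfoStates_\player$ produces the first displayed expression for $\gap_{\player, t}$; the coarser second expression follows termwise from $|\Actions(\infoState)| \le |\ActionSet|$ and $\altGap_\player(\infoState) \le \altGap^*_\player$. The Nash-equilibrium conclusion on $\bar{\policy}^t$ is inherited directly from Theorem \ref{thm:cfr}. There is no substantive obstacle beyond careful bookkeeping: one must verify that the approximation-error function $g$ matches the softmax $g(x)_i = e^{x_i/\tau}/\sum_j e^{x_j/\tau}$ used in the definition of $\altGap_\player(\infoState)$, and observe that the additive $\frac{2U^2}{\tau}$ penalty of Theorem \ref{thm-explink} is incurred at every information state rather than once globally, which is why it appears multiplied by $|\InfoStates_\player|$ in the uniform bound rather than standing outside the sum.
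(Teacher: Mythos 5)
Your proposal is correct and follows essentially the same route as the paper, which explicitly omits this proof as being ``very similar'' to that of Corollary~\ref{thm:largePolyRcfr}: reduce via Theorem~\ref{thm:cfr} to per-information-state ODPs, apply Theorem~\ref{thm-explink} with the expected-regret extension of Greenwald et al.\ [Corollary 18], and sum over $\InfoStates_{\player}$. Your closing remark about the $\nicefrac{2U^2}{\tau}$ term being incurred once per information state is exactly the right bookkeeping observation.
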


This bound shares the same advantage with respect to the action set size dependence  over the polynomial RCFR bounds as the bound of Theorem~\ref{thm-explink} has over the bounds of  Theorems~\ref{thm:rrm} and~\ref{thm:smallPolyRcfr}.

With the exponential link function, $f$-RCFR is approximately Hedge applied to each information state with function approximation. To make a connection with the field of reinforcement learning, we can compare $f$-RCFR with two recently developed algorithms that also generalize Hedge to sequential decision problems and utilize function approximation: \Politex/~\cite{abbasi2019politex} and neural replicator dynamics (NeuRD)~\cite{neuRD}.

In contrast to $f$-RCFR, \Politex/ trains models to predict cumulative action values.
An action value is proportional to a counterfactual value
where the constant depends on the policies of the other players and chance~\cite{zinkevich2008regret,CFR_Actor_Critic}. If \Politex/ instead trains on counterfactual regrets, then we arrive at an $f$-RCFR instance with a softmax parameterization and a regret estimator updated in a two-step process: construct an instantaneous regret estimator and combine it with the previous estimator to predict cumulative regrets.
In fact, our implementation of $f$-RCFR for the experiments that follow uses the same two-step update procedure.

Instead of training a model of instantaneous regrets, NeuRD performs a gradient descent step on the squared loss between the current policy logits and a target constructed by adding the logits to the instantaneous regret after each iteration.
We can see this as a ``bootstrap'' regret target, as described by Morrill~\cite{morrill2016}, where the policy logits are approximate regrets. NeuRD is therefore an instance of $f$-RCFR with a softmax parameterization and a regret estimator trained on bootstrap regret targets.

\section{Experiments}
To examine the impact of the link function, choices for their parameters, and the interaction between link function and function approximation, we test $f$-RCFR in two games commonly used as research testbeds, Leduc hold'em poker~\cite{Southey05leduc} and imperfect information goofspiel~\cite{lanctot13phdthesis} with linear function approximation.
\begin{figure}[t]
    \centering
    \includegraphics[trim={1.3cm 0 0 0}, width=0.8\linewidth]{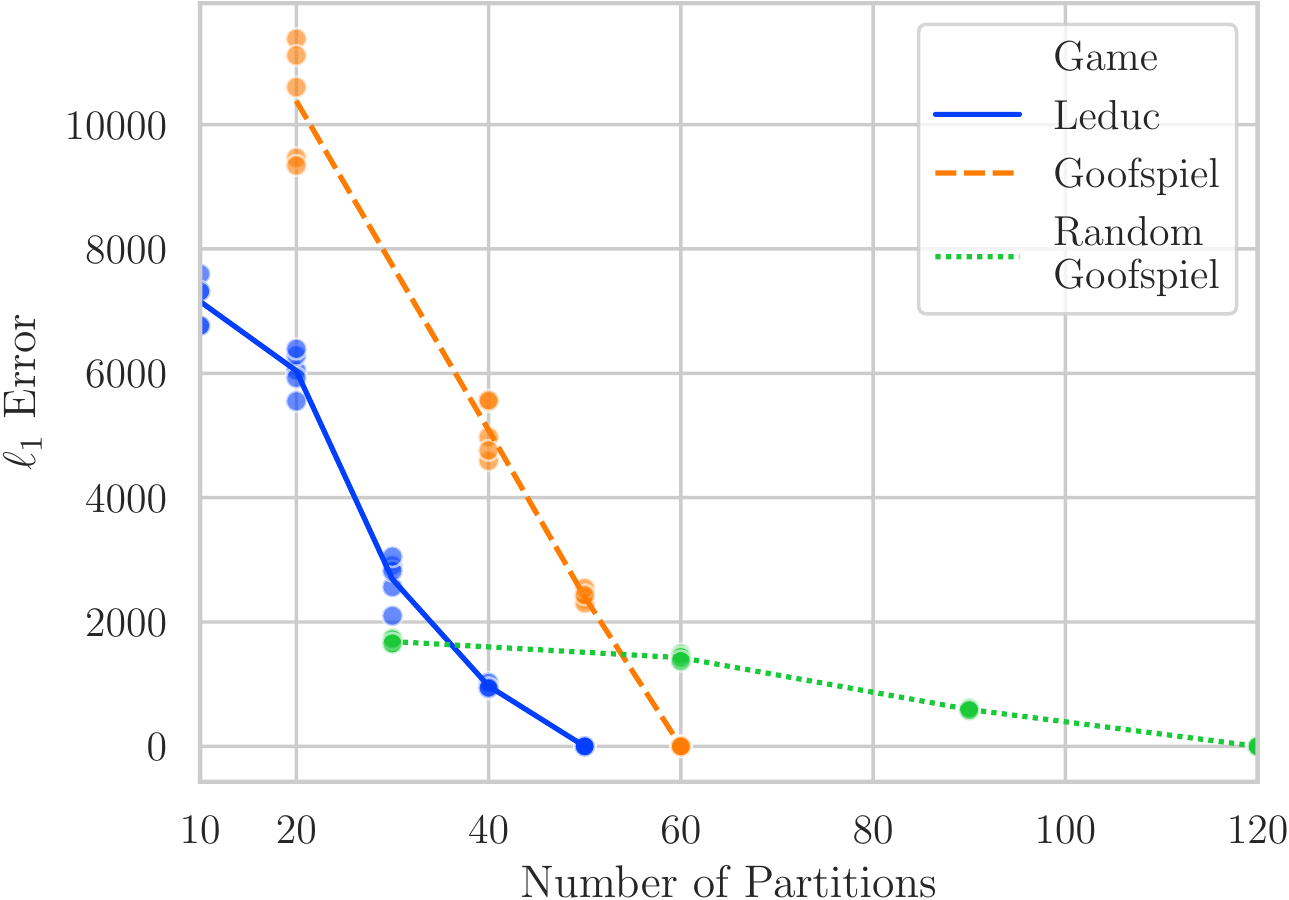}
    \caption{
    The cumulative counterfactual regret estimation error accumulated over time and information states for select $f$-RCFR instances in
    Leduc hold'em poker, goofspiel, and random goofspiel.
    For each game and setting of the number of partitions, we select the link
    function and the parameter with the smallest average exploitability over 5-runs at 100K-iterations.
    The solid lines connect the average error across iterations and dots show the errors of individual runs.}
    \label{fig:error}
\end{figure}

\begin{figure*}[hptb]
  \includegraphics[width=0.9\linewidth]{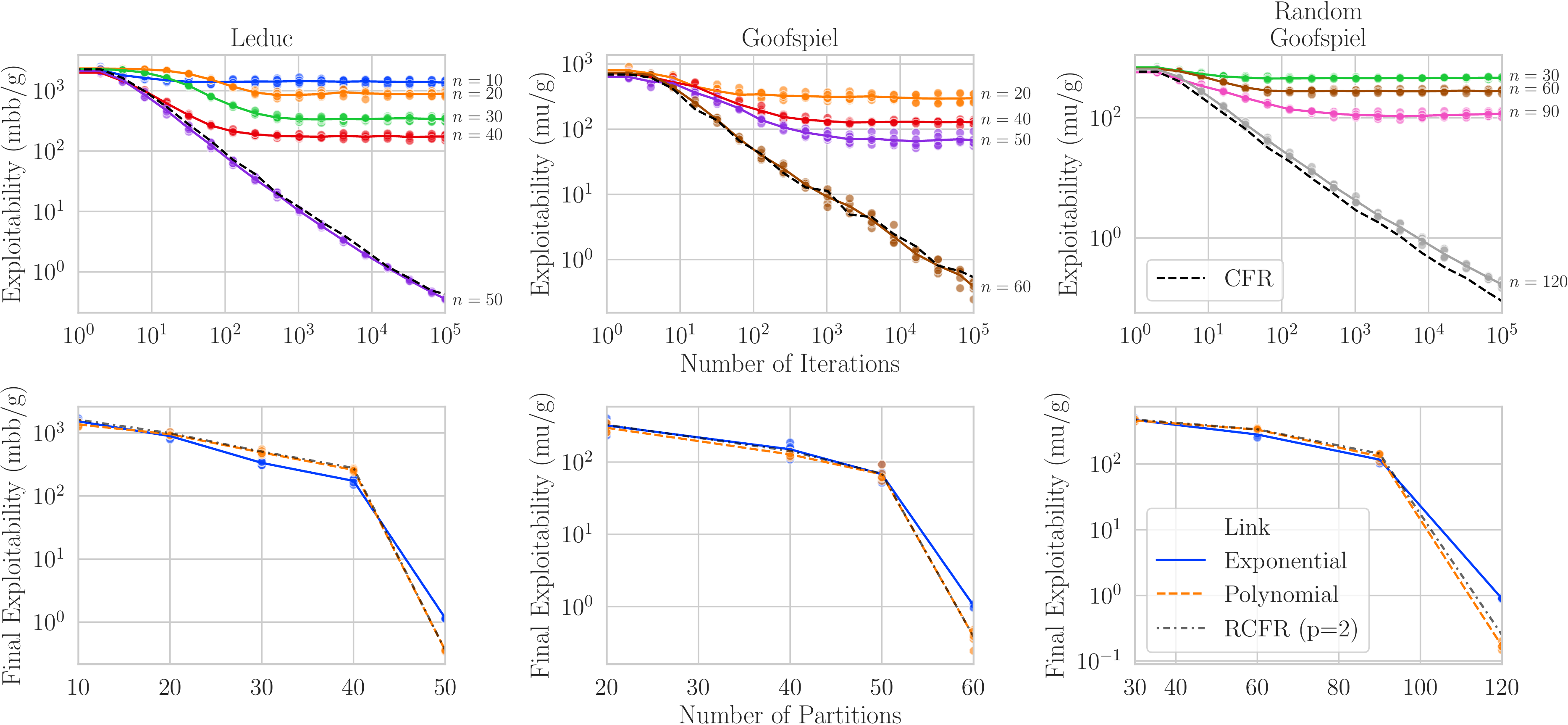}
  \caption{
    (top) The exploitability of the average strategy profile of tabular CFR and $f$-RCFR instances during the first 100K-iterations in Leduc hold'em (top left), goofspiel (top center), and random goofspiel (top right).
    For each setting of the number of partitions, we show the performance of the $f$-RCFR instance with the link function and parameter that achieves the
    lowest average final exploitability over 5-runs.
    The mean exploitability and the individual runs are plotted for the chosen instances as lines and dots respectively.
    (bottom) The final average exploitability after
    100K-iterations for the best exponential and polynomial link function instances in Leduc hold'em (left), goofspiel (center), and random goofspiel (right).
  }
\label{fig:performance}
\end{figure*}

\subsection{Algorithm Implementation}

Our regret estimators are independent linear function approximators for each player, $\player \in \set{1, 2}$, and action $a \in \bigcup_{\infoState \in \InfoStates_{\player}} \Actions(\infoState)$.
Our features are built on tug-of-war hashing features~\cite{bellemare2012sketch}.

We randomly partition the information states that share the same action into $m$-buckets and repeat this $n$-times to generate $n$-sparse indicator features of length $m$. The sign of each feature is randomly flipped to -$1$ independently to reduce bias introduced by collisions. The expected sign associated with all other information states that share a non-zero entry in their feature vector is, by design, zero. We use the number of partitions, $n$, to control the severity of approximation in our experiments.

We do ridge regression on counterfactual regret targets to train our regret estimators.
After the first iteration, we simply add this new vector of weights to our previous weights. Since the counterfactual regrets are computed for each information state-action sequence on every iteration, the same feature matrix is used during training after each iteration. Therefore, the ridge regression solution is a linear function of the targets and the sum of the optimal weights for predicting counterfactual regret yields the ridge regression solution weights for predicting the sum. Beyond training the weights at the end of each iteration, the regrets do not need to be saved or reprocessed.

Since we are most interested in comparing the performance of $f$-RCFR with different link functions and parameters, we track the average policies for each instance exactly in a table. While this is less practical than other approaches, such as learning the average policies from data, it removes another variable from the analysis and allows us to examine the impact of different link functions in relative isolation. Equivalently, we could have saved copies of the regret estimator weights across all iterations and computed the average policy on demand, similarly to Steinberger~\cite{steinberger2019single}.

\begin{figure}[bt]
    \centering
    \includegraphics[trim={2cm 0 0 0}, width=0.95\linewidth]{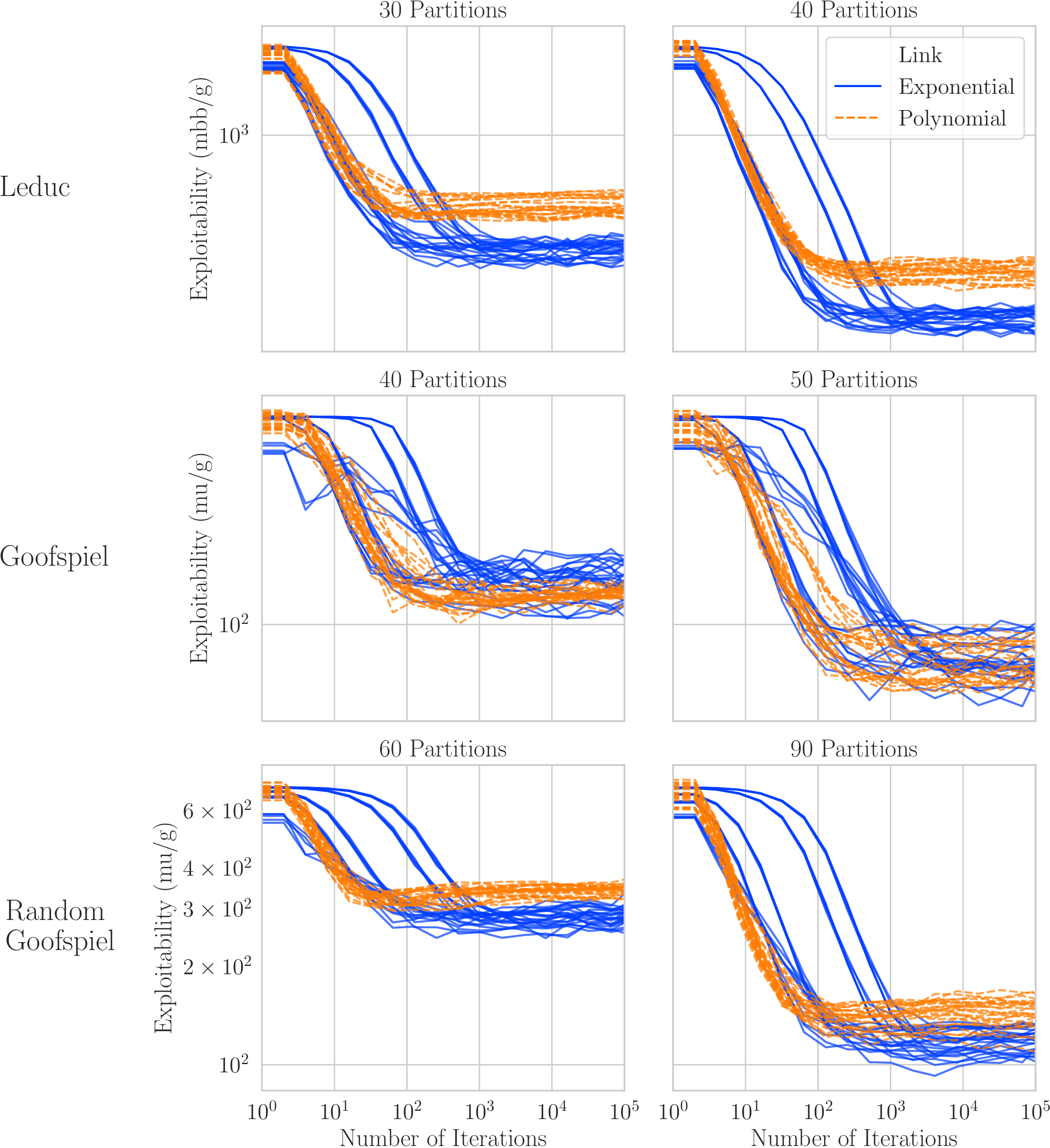}
    \caption{Exploitability of the average strategy profile for all configurations
    and runs with the exponential
    and polynomial link functions.
    The exponential link function achieves a lower exploitability than
    the polynomial link function
    when a moderate number of partitions (30 or 40) are used in Leduc hold'em (top).
    The same occurs in random goofspiel with 60 or 90-partitions (bottom).
    Both link functions perform similarly in goofspiel with 40 or 50-partitions (center).
    }
    \label{fig:plateau}
\end{figure}

\subsection{Games}
In Leduc hold'em poker~\citep{Southey05leduc}, the deck consists of 6 cards, two suits each with 3 ranks (\eg/, king,
queen, and ace), and played with two players. At the start of the game each player antes 1
chip and receives one private card. Betting is restricted to two rounds with a maximum of
two raises each round, and bets are limited to 2 and 4 chips. Before the second round of betting
a public card is revealed from the deck. Provided no one folds, the player
with a private card matching the public card wins, if no players match,
the winnings go to the player with the private card of highest rank. This game has 936 states.

Goofspiel is played with two players and a deck with three suits.
Each suit consists of $N$ cards of different rank. Two of the suits form the hands of the players. The third is used as a deck of point cards. At each round a card is revealed from the
point deck and players simultaneously bid by playing a card
from their hand. The player with
the highest bid (\ie/~highest rank)
receives points equal to the rank of the revealed card. The player with the most points when the point deck runs out is the winner and receives a utility of +1. The loser receives a utility of -1.
We use an imperfect information variant of goofspiel where the bidding cards are not revealed~\citep{lanctot13phdthesis}. We use two variants of goofspiel: one with a shuffled point deck and four ranks that we call ``random goofspiel'' and a second with a sorted point deck in decreasing order but five ranks that we call ``goofspiel''. Goofspiel is roughly twice as large as Leduc hold'em at 2124-information states, while random goofspiel is larger still at 3608-information states.

Our experiments use the \emph{OpenSpiel}~\citep{LanctotEtAl2019OpenSpiel} implementations of these games.

Convergence to a Nash equilibrium in each game is measured by the exploitability of the average strategy
profile after each $f$-RCFR iteration.
Exploitability in Leduc hold'em is measured in milli-big blinds. Exploitability in goofspiel and random goofspiel is measured in milli-utils.

\subsection{Parameters}

From Theorems \ref{thm:cfr} and \ref{thm:folk}, any network of external regret minimizers
(one at each information state) can be combined
to produce an average strategy profile with bounded exploitability. Therefore, the
bounds presented in Section \ref{sec:bounds} provide an exploitability bound for
$f$-RCFR algorithms where $f$ is a polynomial or exponential link function,
and estimates of counterfactual regrets are used at each information state
in place of true values (Corollaries \ref{thm:largePolyRcfr}, \ref{thm:smallPolyRcfr}, and \ref{thm:expRcfr}).

Most notably, the appearance of function approximator error within the regret bounds
in Section \ref{sec:bounds} appear in different forms depending on the link function
$f$. For the polynomial link function, the bounds vary with the $p$ parameter
and similarly the exponential link with the $\tau$ parameter.
We tested the polynomial link function with $p \in \set{1.1, 1.5, 2, 2.5, 3}$ to test values around the common choice ($p = 2$).
The exponential link function was tested with $\tau \in \set{0.01, 0.05, 0.1, 0.5, 1}$ in Leduc hold'em and random goofspiel, and
$\tau \in \set{0.1, 0.5, 1, 5, 10}$ in goofspiel.

To examine the relationships between a link function, link function specific parameters, and
function approximator error, we examine the empirical exploitability of $f$-RCFR
with different levels of approximation. The degree of approximation is adjusted via the quality of features. In particular, we vary the number of partitions, $n$. Increasing $n$ increases discriminative power and reduces approximation error (Figure \ref{fig:error}).

The number of buckets in each partition is fixed at $m=10$. If the number of information states that share an action is not evenly divisible by ten, a subset of the buckets are assigned one more information state than the others. Thus, adding a partition adds ten features. Only one feature per partition is non-zero for any given information set, so the prediction cost grows linearly with the number of partitions. The ridge regression update cost however, grows quadratically with the total number of features.

\subsection{Results and Analysis}

Figure \ref{fig:performance} shows the average exploitability of the best link function and hyper-parameter configuration during learning (top) and after 100k-iterations (bottom). The best parameterization was selected according to the average final exploitability after 100K-iterations over 5-runs.
Notice that the exploitability of the average strategy profile decreases as the number of partitions increases, as predicted by the $f$-RCFR exploitability bounds given the decrease in the prediction error associated with increasing the number of partitions (Figure \ref{fig:error}).

With 30 and 40-partitions in Leduc hold'em, and 60 and 90 in random goofspiel, the best instance with an exponential link function outperforms all of those with polynomial link
functions, including RCFR (polynomial link with $p = 2$) (Figure \ref{fig:plateau}, top and bottom). These feature parameters correspond to a moderate amount of function approximation error. In addition, this performance difference was observed across all configurations of the exponential and polynomial link in Leduc hold'em. \ie/, all of the instances with the exponential link function plateau to a final average exploitability lower than that of all those with  polynomial link functions.

The exponential link function does not outperform the polynomial link function in goofspiel or when the number of partitions is large, however (Figure \ref{fig:plateau}, center and Figure \ref{fig:performance}, bottom). Thus, the relative performance of different link functions is dependent on the game and the degree of function approximation error.

Among the different choices of $p$ for the polynomial link function, $p=2$ (RCFR) performs well with respect to the other polynomial instances across all partition numbers and in all three games
(Figure \ref{fig:performance} (bottom)). It is outperformed only by $p=1.1$ and $p=1.5$ in random goofspiel with many partitions, $n=90$ and $n=120$ respectively.

\section{Conclusions}
In this paper, we generalize the RRM Theorem in two dimensions---the link
function, including the polynomial and exponential link functions---and
regret metrics, including external and internal regret.
The generalization to different link functions allows us to construct
regret bounds for a general $f$-RCFR algorithm. The $f$-RCFR algorithm
can approximate Nash equilibria in zero-sum games with imperfect information
using alternative functional policy parameterizations beyond the
previously studied normalized ReLU parameterization.

We then examine the performance of $f$-RCFR with the polynomial and exponential link functions
under different hyper-parameter choices and different levels of function approximation error
in Leduc hold'em poker and imperfect information goofspiel.
$f$-RCFR with the polynomial link function and $p=2$ often achieves an exploitability competitive with or lower than other choices, but the exponential link function can outperform all polynomial parameters when the functional regret estimator has a moderate degree of approximation.

This work focuses primarily on the benefits of alternatives to the ReLU policy parameterization.
However, extending the RRM Theorem to a more general class of regret metrics
that includes internal regret also suggests future directions, particularly the
approximation of correlated equilibria~\cite{cesa2006prediction} or
extensive-form correlated equilibria~\cite{von2008efce} with function approximation.

NeuRD~\cite{neuRD} and Politex~\cite{abbasi2019politex} demonstrate that
benefits can be gained by adapting a regret-minimizing method to the
function approximation case in RL settings.
These algorithms are also particular ways of implementing approximate Hedge, utilizing softmax policies. Since ReLU policies outperform softmax policies in some cases, it would be worthwhile to investigate their performance in RL applications.

\section*{Acknowledgments}
\phantomsection\addcontentsline{toc}{section}{Acknowledgments}
We acknowledge the support of the Natural Sciences and Engineering
Research Council of Canada (NSERC), the Alberta Machine Intelligence Institute (Amii), and Alberta
Treasury Branch (ATB). Computing resources were provided by WestGrid and Compute Canada.

\bibliographystyle{ACM-Reference-Format}
\bibliography{ref}

\appendix
\section{Existing Results}
Below we recall results from Greenwald \etal/~\cite{greenwald2006bounds} and include the detailed proofs omitted in the main body of the
paper.

Many of the following results make use of a Gordon triple. We restate the definition
from Greenwald \etal/~below.

\begin{definition}\label{gordon_triple}
     A Gordon triple $\langle G, g, \gamma \rangle$ consists of
     three functions $G : \reals^n \to \reals$, $g: \reals^n
     \to \reals^n$, and $\gamma : \reals^n \to \reals$
     such that for all $x,y \in \reals^n$,
     $G(x+y) \leq G(x) + g(x) \cdot y + \gamma(y)$.
\end{definition}

\begin{lemma}\label{lemma:21}
    If $x$ is a random vector that takes values in $\reals^n$, then
    $(\expectation[\mbox{\normalfont max}_i x])^q
    \leq \expectation [\norm{x^+}^q_p]
    $
    for $p,q \geq 1$.
\end{lemma}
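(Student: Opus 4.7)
The plan is to derive the inequality by a pointwise bound followed by Jensen's inequality, since the statement essentially combines monotonicity of $(\cdot)^q$ with convexity of $t\mapsto t^q$ on $[0,\infty)$.

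First I would establish the deterministic (pointwise) inequality $\max_i x_i \leq \|x^+\|_p$ for every realization of $x$ and every $p\geq 1$. This has two sub-steps. The easy one is $\max_i x_i \leq \max_i x_i^+ = \|x^+\|_\infty$, which holds because $t \leq t^+$ for every $t\in\reals$. The second is the standard norm comparison $\|x^+\|_\infty \leq \|x^+\|_p$ for $p\geq 1$, which follows immediately from $\|x^+\|_p^p = \sum_i (x_i^+)^p \geq \max_i (x_i^+)^p = \|x^+\|_\infty^p$ and taking $p$-th roots (permissible because every term is non-negative).

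Next I would take expectations on both sides (monotonicity of expectation) to get $\expectation[\max_i x_i] \leq \expectation[\|x^+\|_p]$. At this point I would split into cases on the sign of the left side. If $\expectation[\max_i x_i] \leq 0$, then $(\expectation[\max_i x_i])^q \leq 0 \leq \expectation[\|x^+\|_p^q]$ for integer $q$, and more generally the claim is interpreted as $0$ on the left. Otherwise both sides of the inequality are non-negative, so I may raise to the $q$-th power using monotonicity of $t\mapsto t^q$ on $[0,\infty)$ to obtain $(\expectation[\max_i x_i])^q \leq (\expectation[\|x^+\|_p])^q$. Finally, Jensen's inequality applied to the convex function $t\mapsto t^q$ (convex on $[0,\infty)$ for $q\geq 1$) together with the non-negativity of $\|x^+\|_p$ gives $(\expectation[\|x^+\|_p])^q \leq \expectation[\|x^+\|_p^q]$. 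Chaining these two bounds yields the claim.

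There is no real obstacle; the whole argument is two standard inequalities stitched together. The only point requiring a brief remark is the sign of $\expectation[\max_i x_i]$, handled by the case split above so that monotonicity of $(\cdot)^q$ and Jensen's inequality can both be applied legally on $[0,\infty)$.
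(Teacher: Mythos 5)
Your argument---the pointwise bound $\max_i x_i \le \max_i x_i^+ = \norm{x^+}_\infty \le \norm{x^+}_p$, monotonicity of expectation, and then Jensen's inequality for the convex map $t \mapsto t^q$ on $[0,\infty)$---is precisely the standard proof; the paper does not reprove this lemma but defers to Greenwald \emph{et al.}, whose argument is the same, so your route matches. One small slip in your edge case: when $\expectation[\max_i x_i] < 0$ the claim $(\expectation[\max_i x_i])^q \le 0$ fails for even integer $q$ (and the lemma as literally stated fails there too, \eg/ $x \equiv -2$, $q = 2$ gives $4 \not\le 0$), so the correct resolution is the other one you offer: the left-hand side is only meaningful, and is only invoked in the proofs of Theorems \ref{thm:largePoly} and \ref{thm:rrm}, when $\expectation[\max_i x_i] \ge 0$, in which case your chain of inequalities is exactly right.
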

See [Lemma 21]\cite{greenwald2006bounds}.

\begin{lemma}\label{lemma:22}
    Given a reward system $(A, \mathcal{R})$
    and a finite set of action transformations
    $\Phi \subseteq \Phi_{ALL}$, then
    $\norm{\rho^\Phi(a,r)}_p \leq \radius (\mu(\Phi))^{1/p}$
    for any reward function $r \in \Pi$.
\end{lemma}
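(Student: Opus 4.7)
My plan is to bound each coordinate of the regret vector uniformly and then to count how many coordinates can be nonzero. Both bounds follow from direct unpacking of definitions.

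\textbf{Step 1: Uniform coordinatewise bound.} For any $\phi \in \Phi$ and any $a \in A$, the $\phi$-regret is
\[
\rho^\phi(a, r) = \mathbb{E}_{a' \sim \phi(a)}[r(a')] - r(a).
\]
Since $|r(a')| \le U$ and $|r(a)| \le U$, taking expectations preserves the bound, giving $|\rho^\phi(a,r)| \le 2U$.

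\textbf{Step 2: Sparsity bound.} If $\phi(a) = \delta_a$, then $\mathbb{E}_{a' \sim \phi(a)}[r(a')] = r(a)$, so $\rho^\phi(a,r) = 0$. Hence the support of $\rho^\Phi(a,r)$ is contained in $\{\phi \in \Phi : \phi(a) \ne \delta_a\}$, whose cardinality is at most $\mu(\Phi)$ by definition of the maximal activation.

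\textbf{Step 3: Combine.} Using Steps 1 and 2,
\[
\norm{\rho^\Phi(a,r)}_p^p = \sum_{\phi \in \Phi} |\rho^\phi(a,r)|^p \le \mu(\Phi) \cdot (2U)^p,
\]
and taking $p$-th roots yields $\norm{\rho^\Phi(a,r)}_p \le 2U (\mu(\Phi))^{1/p}$, as claimed.

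There is no real obstacle here — the lemma is essentially a sparsity-plus-boundedness bookkeeping argument, and the only substantive observation is that action transformations which fix $a$ contribute nothing to $\rho^\Phi(a,r)$, which is exactly how $\mu(\Phi)$ is set up to be used.
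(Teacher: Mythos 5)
Your proof is correct and is essentially the argument the paper defers to: the paper simply cites Greenwald et al.'s Lemma 22, noting only that regrets lie in $[-2U,2U]$ rather than $[-1,1]$, and your sparsity-plus-boundedness computation is exactly that argument written out with the rescaling. Nothing further is needed.
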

The proof is indentical to [Lemma 22]\cite{greenwald2006bounds}
except we have that regrets are bounded in $[-2U, 2U]$
instead of $[-1,1]$. Also note that by assumption
$\mathcal{R}$ is bounded.

\begin{theorem}[Gordon 2005]\label{thm:gordon}
Assume $\langle G, g, \gamma \rangle$ is a Gordon triple and
$C: \mathcal{N} \to \reals$. Let $X_0 \in \reals^n$,
let $x_1,x_2,...$ be a sequence of random vectors over $\reals^n$,
and define $X_t = X_{t-1}+x_t$ for all times $t \geq 1$. \\
If for all times $t \geq 1$,
\[
    g(X_{t-1})\cdot \expectation[x_t|X_{t-1}] +
    \expectation[\gamma(x_t)|X_{t-1}] \leq C(t) \quad a.s.
\]
then, for all times $t \geq 0$,
\[
    \expectation[G(X_t)] \leq G(X_0) + \sum_{\tau=1}^t{C(\tau)}.
\]
\end{theorem}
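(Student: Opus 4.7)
The plan is to prove the bound by induction on $t$, using the defining inequality of the Gordon triple together with the tower property of conditional expectation. The base case $t = 0$ is immediate because $\expectation[G(X_0)] = G(X_0)$ and the sum on the right-hand side is empty, so the bound holds with equality.

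For the inductive step, I would first apply the Gordon triple inequality from Definition \ref{gordon_triple} pointwise with the decomposition $X_t = X_{t-1} + x_t$, obtaining almost surely that
\[
G(X_t) \le G(X_{t-1}) + g(X_{t-1}) \cdot x_t + \gamma(x_t).
\]
Take expectations of both sides and use the tower property by conditioning on $X_{t-1}$. Because $g(X_{t-1})$ is $\sigma(X_{t-1})$-measurable, it factors out of the inner conditional expectation, so
\[
\expectation\subblock{ g(X_{t-1}) \cdot x_t } = \expectation\subblock{ g(X_{t-1}) \cdot \expectation[x_t \mid X_{t-1}] },
\]
and similarly $\expectation[\gamma(x_t)] = \expectation\subblock{ \expectation[\gamma(x_t) \mid X_{t-1}] }$. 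Adding these and invoking the hypothesis that $g(X_{t-1}) \cdot \expectation[x_t \mid X_{t-1}] + \expectation[\gamma(x_t) \mid X_{t-1}] \le C(t)$ almost surely yields
\[
\expectation[G(X_t)] \le \expectation[G(X_{t-1})] + C(t).
\]
The inductive hypothesis $\expectation[G(X_{t-1})] \le G(X_0) + \sum_{\tau = 1}^{t-1} C(\tau)$ then closes the induction.

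The main obstacle, such as it is, is bookkeeping rather than anything conceptual: one must verify that all the expectations are well-defined and finite so that the tower property applies, and that the Gordon triple inequality is used in the pointwise sense needed to preserve the inequality after integrating. Both hold under mild integrability conditions implicit in the statement (in particular, the almost-sure upper bound $C(t)$ on the conditional telescoping quantity ensures finiteness). The whole argument therefore reduces to a single application of the Gordon triple inequality per iteration, chained by a telescoping induction.
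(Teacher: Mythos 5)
Your proof is correct. Note that the paper itself does not prove this theorem---it is imported verbatim from Greenwald et al.\ and attributed to Gordon (2005)---so there is no in-paper argument to compare against; your telescoping induction (apply the Gordon triple inequality pointwise to $X_t = X_{t-1} + x_t$, condition on $X_{t-1}$ so that $g(X_{t-1})$ factors out of the inner expectation, invoke the almost-sure bound by $C(t)$, and chain) is exactly the standard proof of this result in the literature, and the integrability caveats you flag are the only technical points requiring care.
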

It should be noted that the above theorem was originally proved
by Gordon \cite{gordon2005no}.

\section{Proofs}

\subsection{Theorem \ref{thm-epsbound}}

\begin{proof}
We denote $r = (r^\prime(a))_{a \in A}$ as the reward vector for an arbitrary reward function
$r^\prime : A \to \reals$. Since by construction this algorithm chooses $L_t$ at each timestep $t$ to be the fixed point of $\Tilde{M}_t$, all that remains to be shown is that this algorithm satisfies the $(\Phi, f, \epsilon)$-Blackwell condition with
$\epsilon \leq 2 \supReward \norm{ \ylink[\Phi]{t} - \ylinkEst[\Phi]{t} }_1, t > 0$.

By expanding the value of interest in the $(\Phi,f)$-Blackwell condition and applying elementary upper bounds, we arrive at the desired bound. For simplicity, we omit timestep indices with $\algSimple \as \alg$, and similarly for $\ylink[\Phi]{t},
\ylinkEst[\Phi]{t}$.
First, suppose $\sum_{\phi \in \Phi}\ylinkEst[\phi]{} \neq 0$:
\begin{align*}
\ylinkSimple &\cdot \mathbb{E}_{a \sim \algSimple}[\rho^\Phi (a,r)]
= \sum_{\phi \in \Phi}{
  \ylinkSimple[\phi](r\cdot [\phi](\algSimple) - r \cdot \algSimple)
} \\
&= r \cdot \left(
  \sum_{\phi \in \Phi} \ylinkSimple[\phi] \left([\phi](\algSimple) - \algSimple\right)
\right)\\
&= r \cdot \left(
\sum_{\phi \in \Phi} \left(
  \ylinkEstSimple[\phi] - \ylinkEstSimple[\phi] +
  \ylinkSimple[\phi]
\right)\left(
  [\phi](\algSimple) - \algSimple
\right)
\right)\\
&=r \cdot \left(
  \left(\sum_{\phi \in \Phi} \ylinkEstSimple[\phi] \right)
  \left(\Tilde{M} \algSimple - \algSimple \right) +
  \sum_{\phi \in \Phi} (
    \ylinkSimple[\phi] - \ylinkEstSimple[\phi]
  )(
    [\phi](\algSimple) - \algSimple
  )
\right)\\
&=r \cdot \left(
  \sum_{\phi \in \Phi} (
    \ylinkSimple[\phi] - \ylinkEstSimple[\phi]
  )(
    [\phi](\algSimple) - \algSimple
  )
\right)\\
&\leq \norm{r}_\infty \norm{
     \sum_{\phi \in \Phi} (
      \ylinkSimple[\phi] - \ylinkEstSimple[\phi]
    )(
      [\phi](\algSimple) - \algSimple
    )
}_1 \\
&\leq \norm{r}_\infty
  \sum_{\phi \in \Phi} | \ylinkSimple[\phi] - \ylinkEstSimple[\phi] | (
    \norm{[\phi](\algSimple)}_1 +
    \norm{\algSimple}_1
  )\\
&\leq \norm{r}_\infty
  \sum_{\phi \in \Phi} | \ylinkSimple[\phi] - \ylinkEstSimple[\phi] |(1 + 1) \\
&\leq 2 \supReward
  \norm{ \ylinkSimple[\Phi] - \ylinkEstSimple[\Phi] }_1.
\end{align*}
If $\sum_{\phi \in \Phi}\ylinkEstSimple[\phi] = 0$ it is easy to see the inequality still holds.

Therefore, $\{L_t\}_{t=1}^\infty$ satisfies the $(\Phi, f, \epsilon)$-Blackwell condition with
$\epsilon \leq 2 \supReward \norm{ \ylinkSimple[\Phi] - \ylinkEstSimple[\Phi] }_1$, as required to complete the argument.
\end{proof}

An important observation of Theorem \ref{thm-epsbound} is the following corollary:
\begin{corollary}\label{cor:approxRmEpsBlackwell}
For a reward system $(A,\mathcal{R})$, finite set of
action transformations $\Phi \subseteq \Phi_{ALL}$, and two link functions $f$ and $f^\prime$,
if there exists a strictly positive function
$\psi : \reals^{|\Phi|}\to \reals$ such that $f^\prime(x) = \psi(x)f(x)$
then for any $\epsilon \in \reals$,
an approximate $(\Phi, f)$-regret-matching algorithm satisfies
\[
    f^\prime(R^\Phi_{t-1}(h)) \cdot \expectation_{a \sim \alg}[\rho^\Phi(a,r)] \leq 2 \supReward
                 \norm{f^\prime(R^\Phi_{t-1})-f^\prime(\Tilde{R}^\Phi_{t-1})}_1.
\]
\end{corollary}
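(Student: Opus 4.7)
My approach is to observe that, under the hypothesis $f^\prime = \psi f$ with $\psi > 0$, every approximate $(\Phi, f)$-regret-matching algorithm is simultaneously an approximate $(\Phi, f^\prime)$-regret-matching algorithm, and then simply invoke Theorem~\ref{thm-epsbound} with link function $f^\prime$ in place of $f$. This avoids any new manipulation of reward vectors or action-transformation identities; everything reduces to identifying the fixed-point sets of the two associated operators.

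To justify the reduction, I would first write out the operator associated with $f^\prime$ at round $t$:
\[
\Tilde{M}^\prime_t(\policy) = \frac{\sum_{\phi \in \Phi} \psi(\Tilde{R}^\Phi_{t-1}) f(\Tilde{R}^\Phi_{t-1})_\phi [\phi](\policy)}{\sum_{\phi \in \Phi} \psi(\Tilde{R}^\Phi_{t-1}) f(\Tilde{R}^\Phi_{t-1})_\phi}.
\]
Since $\psi(\Tilde{R}^\Phi_{t-1})$ is a scalar that does not depend on $\phi$, it factors out of both the numerator and denominator and cancels, leaving exactly the operator $\Tilde{M}_t$ associated with $f$. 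Hence $\Tilde{M}^\prime_t = \Tilde{M}_t$ and their fixed-point sets coincide. Strict positivity of $\psi$ moreover ensures that $f^\prime(\Tilde{R}^\Phi_{t-1})$ vanishes componentwise if and only if $f(\Tilde{R}^\Phi_{t-1})$ does, so both algorithms enter the ``arbitrarily otherwise'' branch on exactly the same rounds. Consequently, the policy sequence $\{L_t\}$ produced by the approximate $(\Phi, f)$-regret-matching algorithm is admissible as an approximate $(\Phi, f^\prime)$-regret-matching algorithm.

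Applying Theorem~\ref{thm-epsbound} with link function $f^\prime$ then delivers the claim
\[
f^\prime(R^\Phi_{t-1}) \cdot \expectation_{a \sim \alg}[\rho^\Phi(a,r)] \leq 2\supReward \norm{f^\prime(R^\Phi_{t-1}) - f^\prime(\Tilde{R}^\Phi_{t-1})}_1
\]
with no additional work. The only genuinely delicate step is bookkeeping: stating the operator-cancellation precisely and confirming that the degenerate ``denominator zero'' case aligns for the two link functions. Once that is done, the corollary is a one-line consequence of Theorem~\ref{thm-epsbound} and no new inequality manipulations are required beyond those already in its proof.
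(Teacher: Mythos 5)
Your proposal is correct and matches the paper's own argument: the paper likewise observes that the played fixed point is identical under $f$ and $f^\prime$ (since the scalar $\psi(\Tilde{R}^\Phi_{t-1})$ cancels in the operator) and then reuses the proof of Theorem~\ref{thm-epsbound} with $f^\prime$. Your framing of this as the algorithm being simultaneously an approximate $(\Phi, f^\prime)$-regret-matching algorithm, together with the check that the degenerate branch is triggered on the same rounds, is just a slightly more explicit write-up of the same reduction.
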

\begin{proof}
    The reasoning is similar to [Lemma 20]\cite{greenwald2006bounds}. The played fixed point is the
    same under both link functions, thus following the same
    steps to Theorem \ref{thm-epsbound} provides the above bound.
\end{proof}

\subsection{Proof of Theorem \ref{thm-expectationbound}}

The proof is similar to [Corollary
7]\cite{greenwald2006bounds} except that the learning
algorithm is playing the approximate fixed point with respect to the link function $g$.
From Theorem \ref{thm-epsbound} we have $g(R^\Phi_{t-1}(h))\cdot \expectation_{a \sim \alg}[\rho^\Phi(a,r)] \leq 2U\norm{g(R^\Phi_{t-1})-g(\Tilde{R}^\Phi_{t-1})}_1$.
Noticing that
\[
\expectation_{a \sim \alg}[\rho^\Phi(a,r)] =
\expectation[\rho^\Phi(a,r)|R^\Phi_{t-1}]
\]
and taking
$x_t = \rho^\Phi(a,r), X_t=R^\Phi_t$ we have
\begin{align*}
    &g(X_{t-1})\cdot \expectation[x_t|X_{t-1}] +
\expectation[\gamma(x_t)|X_{t-1}] \leq \\
&2U\norm{g(R^\Phi_{t-1})-g(\Tilde{R}^\Phi_{t-1})}_1 +
\underset{a \in A, r \in \Pi}{\sup} \gamma(\rho^{\Phi}(a,r)).
\end{align*}
The result directly follows from Theorem \ref{thm:gordon} by taking
\[
C(\tau) = 2U\norm{g(R^\Phi_{\tau-1})-g(\Tilde{R}^\Phi_{\tau-1})}_1 +
    \underset{a \in A, r \in \Pi}{\sup} \gamma(\rho^{\Phi}(a,r)). \qed{}
\]

\subsection{Proof of Theorem \ref{thm:largePoly}}

The proof follows closely to [Theorem 9]\cite{greenwald2006bounds}.
Taking $G(x) = \norm{x^+}^2_p$ and $\gamma(x)=(p-1)\norm{x}^2_p$ then
$\langle G, g, \gamma \rangle$ is a Gordon triple \cite{greenwald2006bounds}.
Given the above Gordon triple we have
\begin{align*}
       &\left( \expectation\left[
        \underset{\phi \in \Phi}{\mbox{max}}R^\phi_t
    \right] \right)^2 \leq \expectation \left[\norm{(R^\Phi_t)^+}^2_p\right]\\
    &= \expectation[G(R^\Phi_t)] \\
    &\leq G(0) +
    t \underset{a \in A, r \in \Pi}{\sup} \gamma(\rho^{\Phi}(a,r)) +
        2\supReward \sum_{s=1}^t{\norm{g(R^\Phi_{s-1})-g(\Tilde{R}^\Phi_{s-1})}_1} \\
    &\leq
    G(0)  + t(p-1)4U^2(\mu(\Phi))^{2/p}
    + 2 U \sum_{k=1}^t \norm{g(R^\Phi_{k-1}) - g(\Tilde{R}^\Phi_{k-1})}_1.
\end{align*}
The first inequality is from Lemma \ref{lemma:21}. The second inequality follows
from Corollary \ref{cor:approxRmEpsBlackwell} and Theorem \ref{thm-expectationbound}.
The third inequality is an application of Lemma \ref{lemma:22}. The result
then immediately follows. \qed{}

\subsection{Proof of Theorem \ref{thm:rrm}}

The proof follows closely to [Theorem 11]\cite{greenwald2006bounds}.
Taking $G(x) = \norm{x^+}^p_p$ and $\gamma(x)=(p-1)\norm{x}^p_p$ then
$\langle G, g, \gamma \rangle$ is a Gordon triple~\cite{greenwald2006bounds}.
Given the above Gordon triple we have
\begin{align*}
       &\left( \expectation\left[
        \underset{\phi \in \Phi}{\mbox{max}}R^\phi_t
    \right] \right)^p \leq \expectation \left[\norm{(R^\Phi_t)^+}^p_p\right]\\
    &= \expectation \left[G(R^\Phi_t) \right] \\
    &\leq G(0) +
    t \underset{a \in A, r \in \Pi}{\sup} \gamma(\rho^{\Phi}(a,r)) +
        2\supReward \sum_{s=1}^t{\norm{g(R^\Phi_{s-1})-g(\Tilde{R}^\Phi_{s-1})}_1} \\
    &\leq
    G(0)  + t (2U)^p(\mu(\Phi))
    + 2 U \sum_{k=1}^t \norm{g(R^\Phi_{k-1}) - g(\Tilde{R}^\Phi_{k-1})}_1.
\end{align*}
The first inequality is from Lemma \ref{lemma:21}. The second inequality follows
from Corollary \ref{cor:approxRmEpsBlackwell} and Theorem \ref{thm-expectationbound}.
The third inequality is an application of Lemma \ref{lemma:22}. The result
then immediately follows. \qed{}

\subsection{Proof of Theorem \ref{thm-explink}}

The proof follows closely to [Theorem 13]\cite{greenwald2006bounds}.
Taking $G(x) = \tau\mbox{ln}\left(\sum_i{e^{\frac{1}{\tau} x_i}} \right)$ and $\gamma(x)=\frac{1}{2 \tau}\norm{x}^2_\infty$ then
$\langle G, g, \gamma \rangle$ is a Gordon triple \cite{greenwald2006bounds}.
Given the above Gordon triple we have
\begin{align*}
       &\expectation\left[
        \underset{\phi \in \Phi}{\mbox{max }}\frac{1}{\tau} R^\phi_t
    \right]  = \expectation \left[
         \mbox{ln }e^{\underset{\phi \in \Phi}{\mbox{max }} \frac{1}{\tau} R^\phi_t}
    \right]\\
    &= \expectation \left[
         \mbox{ln } \underset{\phi \in \Phi}{\mbox{max }} e^{\frac{1}{\tau} R^\phi_t}
    \right]\\
    &\leq \expectation \left[
         \mbox{ln } \sum_{\phi \in \Phi} e^{ \frac{1}{\tau} R^\phi_t}
    \right]\\
    &= \frac{1}{\tau} \expectation[G(R^\Phi_t)] \\
    &\leq \frac{1}{\tau} \left( G(0) +
    t \underset{a \in A, r \in \Pi}{\sup} \gamma(\rho^{\Phi}(a,r)) +
        2\supReward \sum_{s=1}^t{\norm{g(R^\Phi_{s-1})-g(\Tilde{R}^\Phi_{s-1})}_1} \right) \\
    &\leq \frac{1}{\tau} \left(G(0) +
    t\frac{2U^2}{\tau} +
        2\supReward \sum_{s=1}^t{\norm{g(R^\Phi_{s-1})-g(\Tilde{R}^\Phi_{s-1})}_1} \right).
\end{align*}
The second inequality follows
from Corollary \ref{cor:approxRmEpsBlackwell} and Theorem \ref{thm-expectationbound}.
The result then immediately follows. \qed{}

\end{document}